\def\eqref#1{equation~\ref{#1}}
\def\1{\bm{1}}
\DeclareMathAlphabet{\mathsfit}{\encodingdefault}{\sfdefault}{m}{sl}
\SetMathAlphabet{\mathsfit}{bold}{\encodingdefault}{\sfdefault}{bx}{n}
\newcommand{\tarpdf}[1]{p^+(#1 )}
\newcommand{\noipdf}[1]{p^-(#1 )}
\newcommand{\ratpdf}[1]{\frac{\tarpdf{#1}}{\noipdf{#1}}}
\newcommand{\sratpdf}[1]{\ratpdf{#1}}
\newcommand{\invsratpdf}[1]{\frac{\noipdf{#1}}{\tarpdf{#1}}}
\newcommand{\MCHtodo}[1]{\textcolor{red}{#1}}
\newcommand{\BlackBox}{\rule{1.5ex}{1.5ex}}  
    \renewenvironment{proof}{\par\noindent{\bf Proof\ }}{\hfill\BlackBox\\[2mm]}
    \newenvironment{proof}{\par\noindent{\bf Proof\ }}{\hfill\BlackBox\\[2mm]}
\newtheorem{theorem}{Theorem}
\newtheorem{proposition}[theorem]{Proposition}
\newtheorem{definition}[theorem]{Definition}
\newtheorem{assumption}[theorem]{Assumption}
\title{SINCERE: Supervised Information Noise-Contrastive \\Estimation REvisited}
\author{\name Patrick Feeney \email patrick.feeney@tufts.edu \\
        \addr Dept. of Computer Science\\
        Tufts University\\
        Medford, MA 02155, USA
        \AND
        \name Michael C. Hughes \email michael.hughes@tufts.edu \\
        \addr Dept. of Computer Science\\
        Tufts University\\
        Medford, MA 02155, USA}
\begin{document}

\maketitle

\begin{abstract}
The information noise-contrastive estimation (InfoNCE) loss function provides the basis of many self-supervised deep learning methods due to its strong empirical results and theoretic motivation.
Previous work suggests a supervised contrastive (SupCon) loss to extend InfoNCE to learn from available class labels.
This SupCon loss has been widely-used due to reports of good empirical performance.
However, in this work we find that the prior SupCon loss formulation has questionable justification because it can encourage some images from the same class to repel one another in the learned embedding space.
This problematic intra-class repulsion gets worse as the number of images sharing one class label increases.
We propose the Supervised InfoNCE REvisited (SINCERE) loss as a theoretically-justified supervised extension of InfoNCE that eliminates intra-class repulsion.
Experiments show that SINCERE leads to better separation of embeddings from different classes and improves transfer learning classification accuracy.
We additionally utilize probabilistic modeling to derive an information-theoretic bound that relates SINCERE loss to the symmeterized KL divergence between data-generating distributions for a target class and all other classes.
\end{abstract}

\addtocontents{toc}{\protect\setcounter{tocdepth}{-1}}

\section{Introduction}
\begin{figure*}[!t]
\centering
\setlength{\tabcolsep}{.5mm}
\includegraphics[width=.7\textwidth]{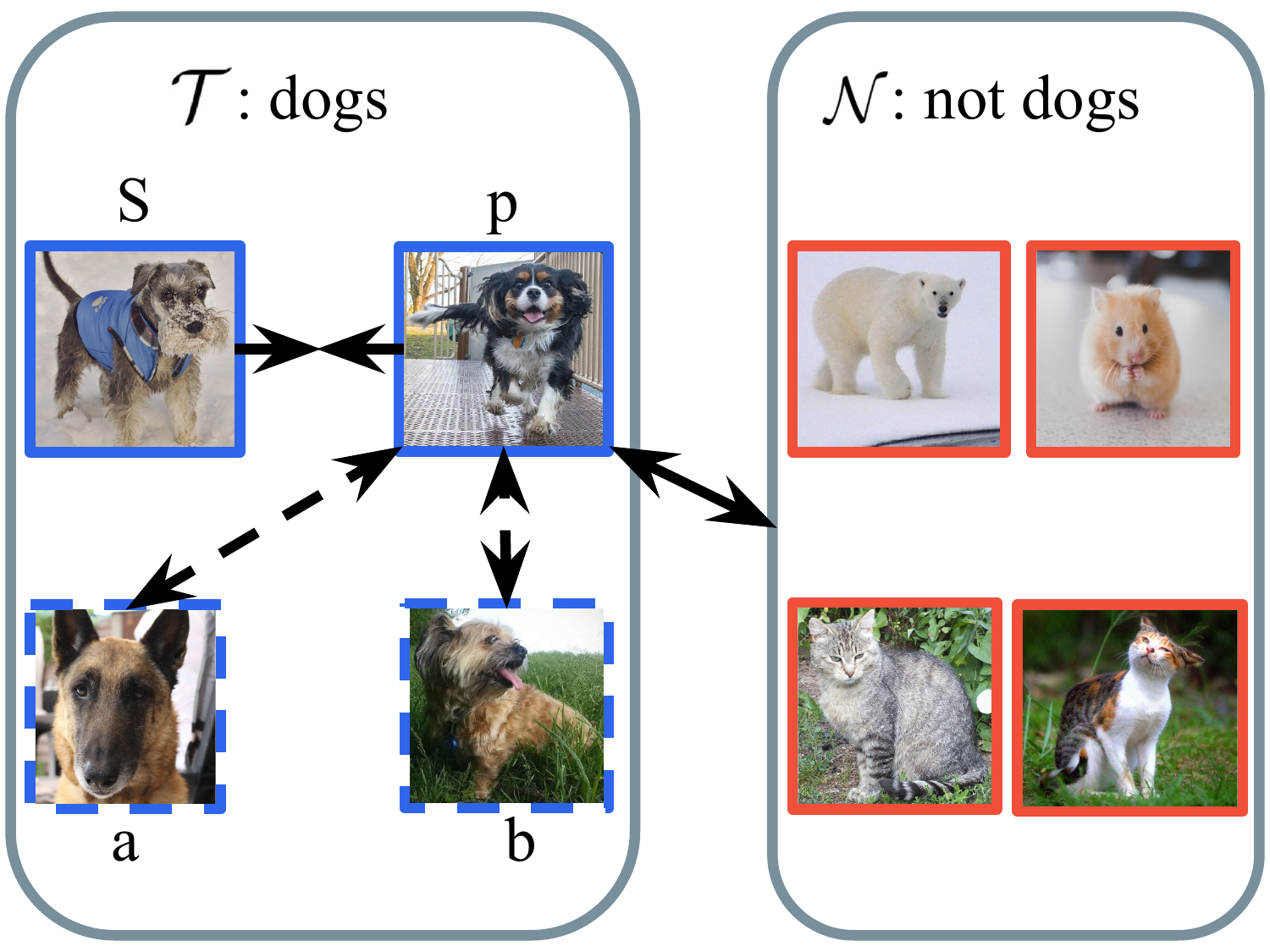}
\\
\setlength{\tabcolsep}{.5mm}
\begin{tabular}{ccc}
\toprule
Method    & Objective Equation to Maximize  & Effect of $a,b$ \\
\midrule
SINCERE 
    & \scalebox{1.08}{
        $\cfrac{e^{z_S \cdot z_{p}}}
            {e^{z_S \cdot z_{p}} + \sum_{n \in \mathcal{N}} e^{z_n \cdot z_p}}$
      }
    & None \\~\\ 
SupCon  
    & \scalebox{1.08}{
        $\cfrac{e^{z_S \cdot z_{p}}}
            {e^{z_S \cdot z_{p}} + e^{z_a \cdot z_p} + e^{z_b \cdot z_p} + \sum_{n \in \mathcal{N}} e^{z_n \cdot z_p}}$ 
      }
    & Repel $p$
\\
\bottomrule
\end{tabular}
\label{fig:loss_vis_sub}
\caption{
Visualization of supervised contrastive learning objectives for pulling together embeddings from the target class, indexed by elements of $\mathcal{T}$, and pushing away embeddings from the noise classes, indexed by elements of $\mathcal{N}$.
Both objectives are defined with respect to a pair of target embeddings $z_S$ and $z_p$.
Solid arrows show \emph{common} effects of both methods: $z_p$ is pulled towards $z_S$ and pushed away from embeddings $z_n$ from the noise classes.
Dashed arrows show \emph{SupCon's problematic intra-class repulsion}: $z_p$ is pushed away from $z_a$ and $z_b$ as if they were from a noise class, despite belonging to the target class.
}
\label{fig:loss_vis}
\end{figure*}

Self-supervised learning (SSL) has been crucial in creating pretrained computer vision models that can be efficiently adapted to a variety of tasks \citep{jing2020survey, Jaiswal_Survey}.
The conceptual basis for many successful SSL methods is the instance discrimination task \citep{Wu_2018_CVPR}, where the model learns to classify each training image as a unique class.
Self-supervised methods solve this task by \emph{contrasting} different augmentations of the same image with other images, seeking a learned vector representation in which each image is close to augmentations of itself but far from others.
Among several possible contrastive losses in the literature~\citep{SwAV, Schroff_2015_CVPR}, one that has seen particularly wide adoption is information noise-contrastive estimation (InfoNCE) \citep{oord2019representation}.
InfoNCE variants such as MoCo \citep{Chen_2021_MOCO3}, SimCLR \citep{SimCLRv1, SimCLRv2}, and BYOL \citep{BYOL} have proven empirically effective.


The aforementioned methods are all for \emph{self-supervised} pretraining of representations from unlabeled images.
Instance discrimination methods may be extended for \emph{supervised} applications to learn representations informed by the available class labels.
A natural way forward is to contrast images of the same class with images from other classes~\citep{Schroff_2015_CVPR}.
The noise contrastive estimation framework \citep{gutmannNCE} implements this idea by assuming that images from the same class are drawn from a target distribution while images from other classes come from a noise distribution.

\citet{Khosla_SupContrast} proposed the supervised contrastive (SupCon) loss as a supervised extension of the InfoNCE loss, forming target and noise distributions using supervised labels.
They sought a loss that could achieve the following goal: ``Clusters of points belonging to the same class are pulled together in embedding space, while simultaneously pushing apart clusters of samples from different classes.''
\citet{Khosla_SupContrast}'s recommended loss, named SupCon, was chosen because it performed best empirically in terms of classification accuracy.
In followup work, SupCon loss has been applied to problems such as contrastive open set recognition \citep{xu_open_set_contrastive_2023} and generalized category discovery \citep{Vaze_2022_CVPR}.

In light of this empirical success, we investigate the theoretical justification for SupCon in this work.
We find that SupCon does not separate target and noise distributions as well as possible due to inconsistent distribution definitions producing \emph{intra-class repulsion}.
Consider the situation illustrated in Fig.~\ref{fig:loss_vis}: we have a target class ``dog'' with at least 3 member images in the current batch, referred to as $S$, $p$, and $a$.
The SupCon objective is defined as an average over many pairs representing the target class.
When the current pair of interest is target image $S$ and same-class partner $p$, optimizing SupCon pulls embeddings of $S$ and $p$ together, but problematically treats dog image $a$ as a noise image and pushes $p$'s representation away from $a$.
This inconsistency makes it difficult to separate target and noise classes in the embedding space.
Moreover, the problem gets worse as the number of target-class images increases, causing more target images to be treated as noise.
Surprisingly, neither the original work on SupCon nor variants of SupCon~\citep{kang2021exploring, feng2022rethinking, Li_2022_CVPR, barbano2023unbiased} have characterized this intra-class repulsion problem.

To resolve this issue, this paper proposes the Supervised InfoNCE REvisited (SINCERE) loss.
When evaluated on the scenario in Fig.~\ref{fig:loss_vis} with image pair $S,p$ representing the target class, our SINCERE loss excludes other target images such as $a$ from the noise distribution, unlike SupCon.
This choice ensures \emph{consistent} definitions of target and noise distributions as in the original self-supervised InfoNCE, thereby eliminating intra-class repulsion.
Other recently proposed losses for self-supervised learning~\citep{NEURIPS2020_63c3ddcc,DecoupledCL}, discussed in Sec.~\ref{sec:rel_work}, also make changes to the denominator of an InfoNCE-like loss, but focus on other problems such as correcting a ``sampling bias'' in the noise distribution or improving efficiency by removing a ``coupling'' effect.
Their ultimate losses are not intended for \emph{supervised} tasks.


Overall, our main contributions are:
\begin{enumerate}
\item We identify intra-class repulsion as a key issue with SupCon loss, demonstrating its problematic effects through a formal analysis of gradients (Sec.~\ref{sec:loss_grad}) and an empirical finding of notably smaller target-noise repulsion in the learned embeddings (see thin margin of separation for SupCon in Fig.~\ref{fig:cos_hist}).

\item We propose the SINCERE loss function for supervised representation learning as a drop-in replacement for SupCon that eliminates intra-class repulsion.
We derive SINCERE via a first-principles analysis of a probabilistic generative model while enforcing a core assumption of noise-contrastive estimation: images known to be from target distribution should not be treated as noise examples.
SINCERE is thus a well-founded generalization of InfoNCE that can use available class labels. Interestingly, we show that SINCERE corresponds to the $\epsilon$-SupInfoNCE loss of \citet{barbano2023unbiased} when $\epsilon=0$. \citeauthor{barbano2023unbiased} did not raise the intra-class repulsion issue, used geometric rather than probabilistic arguments to motivate their loss, and recommend $\epsilon > 0$ values requiring expensive hyperparameter tuning without comparing to the simpler case of $\epsilon =0$.

\item We prove in Thm.~\ref{theorem:KLbound} that our idealized SINCERE loss acts as an upper bound on the negative symmeterized KL divergence between the data-generating target and noise distributions.
This information-theoretic analysis sheds light on how the number of noise samples and the similarity of target and noise distributions impact loss values.


\item We show in our empirical results that SINCERE loss eliminates the problematic intra-class repulsion behavior of the SupCon loss (see large margin of separation for SINCERE in Fig.~\ref{fig:cos_hist}).
This leads to superior accuracy for transfer learning (see Tab.~\ref{table:transfer_acc}) over both SupCon and $\epsilon$-SupInfoNCE~\citep{barbano2023unbiased} in 4 out of 6 datasets, even when that latter method is allowed to tune an extra hyperparameter $\epsilon > 0$. 
\end{enumerate}

Ultimately, practitioners that use SINCERE can enjoy the same or better downstream classification accuracy as SupCon while benefiting from a solid conceptual foundation and \emph{improved separation} of target and noise distributions in the learned embedding space. 
Code for reproducing all experiments is available
at \url{https://github.com/tufts-ml/SupContrast}.

\section{Background}
To begin, we introduce notation used throughout this paper.
See also the notation summary in Appendix Table~\ref{table:notation} for easy reference.
Consider an observed data set $(\mathcal{X}, \mathcal{Y})$ of $N$ elements. Let $\mathcal{X} = (x_1, x_2, ..., x_N)$ define the data feature vectors (e.g. images),
and $\mathcal{Y} = (y_1, y_2, ..., y_N)$ the categorical labels.
Each example (indexed by integer $i$) has a feature vector $x_i$ paired with an integer-valued categorical label $y_i \in \llbracket 1, K \rrbracket$, where $K$ indicates the number of classes and $2 \leq K \leq N$.
Let integer interval $\mathcal{I} = \llbracket 1, N \rrbracket$ denote the set of possible indices for elements in $\mathcal{X}$ or $\mathcal{Y}$.

\subsection{Noise-Contrastive Estimation}

Noise-contrastive estimation (NCE) \citep{gutmannNCE} provides a framework for modeling a target distribution of interest given a set of samples from it.
This framework utilizes a binary classifier to contrast the target distribution samples with samples from a noise distribution.
This noise distribution is an arbitrary distribution different from the target distribution, although in practice the noise distribution must be similar enough to the target distribution to make the classifier learn the structure of the target distribution \MCHtodo{\citep{gutmannNCE}}.
Works described in subsequent sections maintain focus on contrasting target and noise distributions while defining these distributions as generating disjoint subsets of a data set. 

\subsection{Self-Supervised Contrastive Learning}
\label{sec:rel_self}

Self-supervised contrastive learning pursues an \emph{instance discrimination} task \citep{Wu_2018_CVPR} to learn effective representations.
This involves treating each example in the data set as a separate class. 
Therefore each example $i$ has a \emph{unique} label $y_i$ and the number of classes $K$ is equal to $N$.

To set up the instance discrimination problem, let $S \in \mathcal{I}$ denote the index of the only member of the target distribution in the data set.
Let the rest of the data set $\mathcal{N} = \mathcal{I} \setminus \{S\}$ be drawn from the noise distribution.
Applying NCE produces the Information Noise-Contrastive Estimation (InfoNCE) loss \citep{oord2019representation}
\begin{equation}
    L_\text{InfoNCE}(x_S, y_S) {=} {-}\log \frac{e^{f(x_S, y_S)}}{e^{f(x_S, y_S)} + \sum_{n \in \mathcal{N}} e^{f(x_n, y_S)}}
\end{equation}
where $f(x_i, y_j)$ is a classification score function which outputs a scalar score for image $x_i$ where greater values indicate that label $y_j$ is more likely.
When $y_S$ denotes the target class, loss $L_\text{InfoNCE}(x_S, y_S)$ thus calculates the negative log-likelihood that index $S$ correctly locates the only sample from the target class.

The function $f$ is often chosen to be cosine similarity by representing both terms as vectors in an embedding space~\citep{Wu_2018_CVPR, SimCLRv1, Chen_2021_MOCO3}.
Let $z_i \in \mathbb{R}^D$ be a $D$-dimensional unit vector representation of data element $x_i$ produced by a neural network.
Embeddings are also used to represent the target instance label $y_S$, so let $z_S'$ be a second representation of the target instance $x_S$.
$z_S'$ can be produced by embedding a data augmented copy of $x_S$ \citep{Le-Khac_Survey, SimCLRv1}, embedding $x_S$ via older \citep{Wu_2018_CVPR} or averaged \citep{Chen_2021_MOCO3} embedding function parameters, or a combination of these techniques \citep{Jaiswal_Survey}.

Rewriting $L_\text{InfoNCE}$ in terms of a data augmented $z_S'$ and setting $f(z_i, z_j) = z_i \cdot z_j / \tau$, with $\tau > 0$ acting as a temperature hyperparameter, produces the self-supervised contrastive loss proposed by \citet{Wu_2018_CVPR}
\begin{equation}
    L_\text{self}(z_S, z_S') = -\log \frac{e^{z_S \cdot z_S' / \tau}}
    {e^{z_S \cdot z_S' / \tau} + \sum_{n \in \mathcal{N}} e^{z_n \cdot z_S' / \tau}}
    \label{eq:InfoNCELoss}
\end{equation}
Note that we assume each embedding vector is a unit vector ($z_i \cdot z_i = 1$ for all $i$), so the dot products in the equation above will always be in $[-1, 1]$ before scaling by $\tau$.

InfoNCE and the subsequent self-supervised contrastive losses cited previously are all theoretically motivated by NCE, as described in \citet{oord2019representation}.
The larger instance discrimination problem is posed as a series of binary classification problems between instance-specific target and noise distributions.
This clear distinction between target and noise underlies our later SINCERE loss.

\subsection{Supervised Contrastive Learning (SupCon)}

In the supervised setting, 
labels $y_i$ are no longer unique for each data point.
Instead, we assume a fixed set of $K$ known classes, such as ``dogs'' and ``cats,'' with multiple examples in each class.
Again, the larger $K$-way discrimination task is posed as a series of binary NCE tasks. Each binary task distinguishes one target class from a noise distribution made up of the $K-1$ remaining classes.




Let index $S$ again denote a selected instance that defines the current target class $y_S$. 
Let $\mathcal{T} = \{i \in \mathcal{I} | y_i = y_S\}$ be the set of all elements from the target class: unlike the self-supervised case, here $\mathcal{T}$ has \emph{multiple} elements.
Let the possible set of same-class partners for index $S$ be $\mathcal{P} = \mathcal{T} \setminus \{S\}$.
Let the set of indices from the noise distribution be $\mathcal{N} = \mathcal{I} \setminus \mathcal{T}$.

\citet{Khosla_SupContrast} propose a supervised contrastive loss known as ``SupCon''. For chosen $S$, the overall loss is $\frac{1}{|\mathcal{P}|} \sum\limits_{p \in \mathcal{P}} L_\text{SupCon}(z_S, z_p)$, with the pair-specific loss $L_\text{SupCon}$ defined as
\begin{align}
L_{\text{SupCon}}(z_S, z_p) =
    -\log
    \frac{e^{z_S \cdot z_p / \tau}}
    {e^{z_S \cdot z_p / \tau} + \sum_{j \in \mathcal{P} \setminus \{p\}} (e^{z_j \cdot z_p / \tau}) + \sum_{n \in \mathcal{N}} (e^{z_n \cdot z_p / \tau})}
    \label{eq:supcon}
\end{align}
where $z_p$ denotes another embedding from the target class.
Though our notation differs, the above is equivalent to the overall loss in Equation 2 of the original SupCon work by \citet{Khosla_SupContrast}. See App.~\ref{supp:supcon_eq_notation} for details.

\citet{Khosla_SupContrast} suggest this loss as a ``straightforward'' extension of $L_{\text{self}}$ to the supervised case, with the primary justification given via their reported empirical success on classification, especially on the widely-used ImageNet data set \citep{ImageNet}.

\subsubsection{Intra-Class Repulsion}

The core issue with SupCon motivating our work is \emph{intra-class repulsion}.
Despite SupCon's stated goal that images of the same class ``are pulled together in embedding space'' \citep{Khosla_SupContrast}, we find that some image pairs sharing a label can be pushed apart in embedding space.
Examining \eqref{eq:supcon}, we notice the SupCon loss contains terms from the target distribution in the denominator, indexed by $j$ when $j \neq S$, that are not in the numerator when $|\mathcal{P}| > 1$.
In contrast, the self-supervised loss in \eqref{eq:InfoNCELoss} includes all target terms in both numerator and denominator.
SupCon's choice to have some target examples only in the denominator of \eqref{eq:supcon} effectively treats them as part of the noise distribution. That is, the SupCon loss will favor pushing such embeddings $z_j$ away from $z_p$, as illustrated in Fig.~\ref{fig:loss_vis}.
This problematic intra-class repulsion complicates analysis of the loss \citep{pmlr-v139-graf21a} and limits SupCon's ability to achieve its stated goal: separate embeddings by class.

\section{Method}
In this section, we develop a revised loss for supervised contrastive learning called SINCERE.
We derive and justify our SINCERE loss in Sec.~\ref{sec:derivation}, showing how it arises from applying noise-contrastive estimation to a supervised problem via the same probabilistic principles that justify InfoNCE for self-supervised learning.
In fact, InfoNCE becomes a special case of our proposed SINCERE framework.
Sec.~\ref{sec:bounds} derives an information-theoretic bound on the SINCERE loss.
Sec.~\ref{sec:practical} describes a practical implementation of the SINCERE loss.
Sec.~\ref{sec:loss_grad} shows the SINCERE loss gradient eliminates the intra-class repulsion present in the SupCon loss gradient.
Finally, Sec.~\ref{sec:rel_work} examines how SINCERE loss relates to other works building on InfoNCE and SupCon losses, especially highlighting SINCERE's unexpected correspondence to $\epsilon$-SupInfoNCE of \citet{barbano2023unbiased}, a loss motivated by geometric arguments without any probabilistic analysis.

\subsection{Derivation of SINCERE}
\label{sec:derivation}

We first review the principles that establish InfoNCE loss as suitable for self-supervised learning.
We then extend that derivation to the more general supervised learning case.
Each derivation proceeds in three steps. 
First, we establish a data-generating probabilistic model involving a target and a noise distribution.
Second, we formulate a one-from-many selection task: given a batch of many images drawn from the model, we must select which one index comes from the target distribution.
Third, we pursue a tractable model for this selection task, parameterized by a neural network, for the applied case when the data-generating distributions are not known.
Ultimately, our proposed SINCERE loss, like InfoNCE before it, is interpreted as a negative log likelihood of the tractable neural network approximation for this selection task.



\subsubsection{Self-Supervised Probabilistic Model}

In the self-supervised case, 
instance discrimination is posed as a series of binary target-noise discrimination tasks.  
Each target distribution produces images that depict a specific instance, such as data augmentations of a single image.
The corresponding noise distribution generates images of other possible instances.
Throughout this subsection, we focus  analysis on \emph{just one} of these binary tasks, treating its target and noise distributions as fixed. 
Later, Sec.~\ref{sec:practical} will describe how our approach models multiple target-noise tasks in practice.


\begin{assumption}
We observe a data set $\mathcal{X}$ of $N$ examples with unknown labels such that exactly one example's data is drawn from the target distribution.
\end{assumption}

Let random variable $S \in \mathcal{I}$ indicate the index of the example sampled from the target distribution with PDF $\tarpdf{x_i}$.
All other examples $\mathcal{N} = \mathcal{I} \setminus \{S\}$ are drawn i.i.d. from the noise distribution with data-generating PDF $\noipdf{x_i}$.
We assume $\noipdf{x_i}$ is nonzero whenever $\tarpdf{x_i}$ is nonzero, as in \citet{gutmannNCE}. Unlike the alternative generative model in \citet{arora2019theoretical}, we do not require the noise distribution to be a mixture of target and non-target latent classes; instead we merely assume the noise distribution is distinct from the target, without any need for latent classes.

\begin{definition} \label{def:selfsup_model}
The ``true'' data-generating model for the self-supervised case, under the above assumptions, generates index $S$ and then data set $\mathcal{X}$ from the distributions below
\begin{align}
\label{eq:SelfSupModel}
p(\mathcal{X}, S) = p(\mathcal{X} | S) p(S), \qquad \qquad 
p(S) &= \text{Unif}(\mathcal{I}) \\
\notag 
p(\mathcal{X} | S) &= \tarpdf{x_S} \prod_{n \in \mathcal{N}} \noipdf{x_n}
\end{align}
where $\text{Unif}(\mathcal{I})$ is the uniform distribution over indices from 1 to $N$. The assumed model thus factorizes as $p(\mathcal{X}, S) = p(\mathcal{X} | S) p(S)$.
\end{definition}

This model is used to solve a selection task: given data set $\mathcal{X}$, which single index is from the target class?


\begin{proposition}
Assume $\mathcal{X}$ is generated via the model in Def.~\ref{def:selfsup_model} and that the target and noise PDFs are known.
The probability that index $S$ is the sole draw from the target distribution is
\begin{align}
p(S | \mathcal{X}) &=
\frac{
    \tarpdf{x_S}
    \prod_{n \in \mathcal{N}} \noipdf{x_n}
    }{
    \sum_{i \in \mathcal{I}} 
        \tarpdf{x_i}
        \prod_{j \in \mathcal{I} \setminus \{i\}} \noipdf{x_j}
    }
\\
&= \frac{\ratpdf{x_S}}
{\ratpdf{x_S} + \sum_{n \in \mathcal{N}} \ratpdf{x_n} 
}.
\label{eq:InfoLikelihood}
\end{align}
\end{proposition}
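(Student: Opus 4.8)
The plan is to apply Bayes' rule to the joint model of Definition~\ref{def:selfsup_model} and then algebraically simplify. First I would write
\[
p(S \mid \mathcal{X}) = \frac{p(\mathcal{X} \mid S)\, p(S)}{p(\mathcal{X})} = \frac{p(\mathcal{X} \mid S)\, p(S)}{\sum_{i \in \mathcal{I}} p(\mathcal{X} \mid S{=}i)\, p(S{=}i)},
\]
using the law of total probability for the marginal $p(\mathcal{X})$. Since $p(S) = \text{Unif}(\mathcal{I})$ assigns mass $1/N$ to every index, the factor $p(S{=}i) = 1/N$ is constant in $i$ and cancels between numerator and denominator. Substituting $p(\mathcal{X} \mid S{=}i) = \tarpdf{x_i} \prod_{j \in \mathcal{I} \setminus \{i\}} \noipdf{x_j}$ from Def.~\ref{def:selfsup_model} then yields the first displayed equality exactly.

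For the second equality I would divide both numerator and denominator by the common quantity $\prod_{k \in \mathcal{I}} \noipdf{x_k}$. This is legitimate almost surely: the noise draws $x_n$, $n \in \mathcal{N}$, lie in the support of $\noipdf{\cdot}$, and the target draw $x_S$ lies in the support of $\tarpdf{\cdot}$, which is contained in the support of $\noipdf{\cdot}$ by the standing assumption (following \citet{gutmannNCE}) that $\noipdf{x}$ is nonzero wherever $\tarpdf{x}$ is nonzero; hence every factor $\noipdf{x_k}$ is strictly positive with probability one. After the division, the numerator $\tarpdf{x_S} \prod_{n \in \mathcal{N}} \noipdf{x_n}$ becomes $\tarpdf{x_S}/\noipdf{x_S} = \ratpdf{x_S}$, and each summand $\tarpdf{x_i} \prod_{j \in \mathcal{I} \setminus \{i\}} \noipdf{x_j}$ in the denominator becomes $\tarpdf{x_i}/\noipdf{x_i} = \ratpdf{x_i}$.

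Finally I would split the resulting sum over $\mathcal{I}$ using the partition $\mathcal{I} = \{S\} \cup \mathcal{N}$, isolating the $i = S$ term to obtain $\sum_{i \in \mathcal{I}} \ratpdf{x_i} = \ratpdf{x_S} + \sum_{n \in \mathcal{N}} \ratpdf{x_n}$, which gives the second displayed expression. There is no substantive obstacle here; the only points requiring care are the bookkeeping of which index set the inner product ranges over — it is $\mathcal{I} \setminus \{i\}$ for each \emph{hypothesized} target index $i$, coinciding with $\mathcal{N}$ only when $i = S$ — and the almost-sure justification for dividing through by $\prod_{k \in \mathcal{I}} \noipdf{x_k}$.
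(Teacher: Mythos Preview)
Your proposal is correct and follows essentially the same approach as the paper: Bayes' theorem for the first equality, then dividing numerator and denominator by $\prod_{k \in \mathcal{I}} \noipdf{x_k}$ and using $\mathcal{I} = \{S\} \cup \mathcal{N}$ for the second. You have simply made explicit the steps the paper compresses into ``Bayes theorem'' and ``algebraic simplifications,'' including the careful almost-sure justification for the division, which the paper leaves implicit.
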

\begin{proof}
Bayes theorem produces the first formula given the joint $p(\mathcal{X}, S)$ defined in \eqref{eq:SelfSupModel}. 
Algebraic simplifications lead to the second formula,
recalling $\mathcal{I} = \mathcal{N} \cup \{S\}$.
\end{proof}

This proposition formalizes a result from \citet{oord2019representation} used to motivate their InfoNCE loss.
As argued formally in Sec. \ref{sec:justification}, a neural network $f_\theta$ that learns to approximate $\ratpdf{x_i}$ minimizes InfoNCE loss.

\subsubsection{Supervised Probabilistic Model}

In the supervised case, supervised classification is posed as a series of binary target-noise discrimination tasks.
Each target distribution produces images that depict a single class, such as ``dog.''
The corresponding noise distribution generates images of all other classes, such as ``cat'' and ``hamster.''
As above, throughout this subsection we assume one fixed target and one fixed noise distribution.
Later sections will describe how to attack multiple target-noise tasks in practice.


\begin{assumption}
We observe a data set $\mathcal{X}$ of $N$ examples with unknown labels such that exactly $T$ examples represent draws from the target distribution, with $2 \leq T < N$.
\label{assump:super}
\end{assumption}

Let random variable $\mathcal{P} \in \{I \subset \mathcal{I} \mid |I| = T - 1\}$ indicate a set of $T-1$ indices identifying all but one of the examples from the target distribution with data-generating PDF $\tarpdf{x_i}$.
Let random variable $S \in \mathcal{I} \setminus \mathcal{P}$ indicate the index of the final example sampled from the target distribution.
The set of all indices from the target distribution is denoted $\mathcal{T} = \mathcal{P} \cup \{S\}$.
The remaining indices $\mathcal{N} = \mathcal{I} \setminus \mathcal{T}$ are drawn i.i.d. from the noise distribution, whose PDF is $\noipdf{x_i}$.

\begin{definition}
The ``true'' data-generating model for the supervised case is
\begin{align}
\label{eq:SINCERE_model}
p(\mathcal{X}, S,  \mathcal{P}) = p(\mathcal{X} | S, \mathcal{P}) p(S | \mathcal{P}) p(\mathcal{P}), \qquad 
p(\mathcal{P}) &= \text{Unif}(\{I \subset \mathcal{I} \mid |I| {=} T - 1 \}) 
\\ \notag
p(S | \mathcal{P}) &= \text{Unif}(\mathcal{I} \setminus \mathcal{P})
\\ \notag 
p(\mathcal{X} | S, \mathcal{P}) &= 
    \tarpdf{x_S} 
    \prod_{p \in \mathcal{P}} \tarpdf{x_p}
    \prod_{n \in \mathcal{N}} \noipdf{x_n}
\end{align}
where $\mathcal{P}$'s distribution is uniform over sets of exactly $T {-} 1$ distinct indices within the larger set $\mathcal{I}$.
\end{definition}

In the special case of $T = 1$, note that $\mathcal{P}$ becomes the empty set, and the supervised model in \eqref{eq:SINCERE_model} (where knowledge of $\mathcal{P}$ provides additional information about the target class) reduces to the simpler self-supervised model in \eqref{eq:SelfSupModel}.
This connection suggests our SINCERE framework can also be used for instance discrimination with more than two augmentations of each image instance.

The following likelihood of the selection task index given observed $\mathcal{X}$ and $\mathcal{P}$ values generalizes \eqref{eq:InfoLikelihood} to the supervised case and motivates the SINCERE loss.

\begin{proposition}
\label{prop:SINCEREsuper}
Assume $\mathcal{X}$ and $\mathcal{P}$ are generated from the joint distribution defined in \eqref{eq:SINCERE_model} and that the target and noise PDFs are known.
The probability that any index $S$ in $\mathcal{I} \setminus \mathcal{P}$ is the final draw from the target distribution is
\begin{align}
p(S | \mathcal{X}, \mathcal{P})
&= \frac{\ratpdf{x_S}}
{\ratpdf{x_S} + \sum_{n \in \mathcal{N}} \ratpdf{x_n} 
}.
\label{eq:SINCERElikelihood}
\end{align}
\end{proposition}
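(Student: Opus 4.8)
The plan is to follow the same route as the proof of the self-supervised Proposition: apply Bayes' theorem to the joint distribution in \eqref{eq:SINCERE_model}, then cancel every factor that does not depend on the candidate target index. First I would write, by Bayes' theorem,
\[
p(S \mid \mathcal{X}, \mathcal{P}) \;=\; \frac{p(\mathcal{X} \mid S, \mathcal{P})\, p(S \mid \mathcal{P})}{\sum_{i \in \mathcal{I} \setminus \mathcal{P}} p(\mathcal{X} \mid S{=}i, \mathcal{P})\, p(S{=}i \mid \mathcal{P})},
\]
where the sum ranges over the $N - T + 1$ candidate indices in $\mathcal{I}\setminus\mathcal{P}$. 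Since $p(S \mid \mathcal{P}) = \text{Unif}(\mathcal{I}\setminus\mathcal{P})$ takes the constant value $1/(N-T+1)$ no matter which index is conditioned on, it cancels from numerator and denominator.

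Next I would simplify $p(\mathcal{X}\mid S{=}i,\mathcal{P})$ for a generic candidate $i \in \mathcal{I}\setminus\mathcal{P}$. By \eqref{eq:SINCERE_model}, with $\mathcal{T} = \mathcal{P}\cup\{i\}$ and $\mathcal{N} = \mathcal{I}\setminus\mathcal{T}$, it equals $\tarpdf{x_i}\prod_{p\in\mathcal{P}}\tarpdf{x_p}\prod_{n\in\mathcal{I}\setminus\mathcal{P}\setminus\{i\}}\noipdf{x_n}$. The factor $\prod_{p\in\mathcal{P}}\tarpdf{x_p}$ is independent of $i$ and cancels. For what remains, the key algebraic move (exactly as in the self-supervised case) is to multiply and divide by $\noipdf{x_i}$, rewriting $\tarpdf{x_i}\prod_{n\in\mathcal{I}\setminus\mathcal{P}\setminus\{i\}}\noipdf{x_n}$ as $\ratpdf{x_i}\prod_{n\in\mathcal{I}\setminus\mathcal{P}}\noipdf{x_n}$; the new product runs over the \emph{fixed} index set $\mathcal{I}\setminus\mathcal{P}$, so it too is a constant independent of $i$ and cancels. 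This step is legitimate because we assumed $\noipdf{x_i}$ is nonzero wherever $\tarpdf{x_i}$ is nonzero.

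After all cancellations we obtain $p(S\mid\mathcal{X},\mathcal{P}) = \ratpdf{x_S} \big/ \left(\sum_{i\in\mathcal{I}\setminus\mathcal{P}}\ratpdf{x_i}\right)$. To finish I would note that $\mathcal{I}\setminus\mathcal{P} = \{S\}\cup\mathcal{N}$, which splits the denominator sum into the $S$-term plus $\sum_{n\in\mathcal{N}}\ratpdf{x_n}$, yielding \eqref{eq:SINCERElikelihood}. I do not expect a genuine obstacle here; the argument is routine bookkeeping. The only point deserving care is tracking precisely which products depend on the candidate $i$ and which are shared constants — and the reason the formula comes out identical in shape to the self-supervised \eqref{eq:InfoLikelihood} is exactly that every $\mathcal{P}$-dependent factor is common to all candidates and drops out.
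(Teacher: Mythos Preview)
Your proposal is correct. The route differs slightly from the paper's: the paper first computes the joint posterior $p(S,\mathcal{P}\mid\mathcal{X})$ --- which entails normalizing over \emph{all} size-$T$ subsets of $\mathcal{I}$ via a constant they call $\Omega$ --- and only then divides by $p(\mathcal{P}\mid\mathcal{X})=\sum_{j\in\mathcal{I}\setminus\mathcal{P}}p(S{=}j,\mathcal{P}\mid\mathcal{X})$, at which point $\Omega$ and the $\mathcal{P}$-product cancel. You instead condition on $\mathcal{P}$ from the outset and normalize only over the $N-T+1$ candidate values of $S$, so the large combinatorial sum $\Omega$ never appears. Both are straightforward Bayes-plus-cancellation arguments and neither requires a new idea; your version is just the more economical bookkeeping, and the multiply-and-divide-by-$\noipdf{x_i}$ step you highlight is exactly the same simplification the paper performs.
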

\begin{proof}
We first derive an expression for $p(S, \mathcal{P} | \mathcal{X})$ from the joint defined in \eqref{eq:SINCERE_model}.
Standard probability operations (sum rule, product rule) then allow obtaining the desired $p(S | \mathcal{X}, \mathcal{P})$.
For details, see App.~\ref{supp:proof_of_prop_super}.
\end{proof}

In contrast to SINCERE, attempts to translate SupCon loss in \eqref{eq:supcon} into the noise-contrastive paradigm do \emph{not} result in a coherent probabilistic model, as detailed in App.~\ref{sec:supcon_compare}.
In short, SupCon loss cannot be viewed as a valid PMF for the conditional probability $p(S | \mathcal{X}, \mathcal{P})$ due to extra target class terms in the denominator.

\subsubsection{Ideal SINCERE Loss}

\label{sec:tractable}
In most applications, we can only observe a (partially) labeled dataset. We will not know the true density functions $p^+$ and $p^-$ for target and noise distributions, as assumed in~\eqref{eq:SINCERElikelihood}.
Instead, given only data $\mathcal{X}$ and indices $\mathcal{P}$ that represent the target class, we can build an alternative tractable model for determining the index $S$ of the final member of the target class. Here we define this model and a loss to fit it to data.

\begin{definition}
Let neural net $f_{\theta}(x_i, y_S)$, with parameters $\theta$, map any input data $x_i$ and target class $y_S$ to a real value indicating the relative confidence that $x_i$ belongs to the target class, where a greater value implies more confidence.
Our tractable model for selecting index $S$ given data set $\mathcal{X}$ and known class members $\mathcal{P}$ is
\begin{align}
    p_{\theta}(S | \mathcal{X}, \mathcal{P} ) = \frac{ e^{f_{\theta}(x_S, y_S)} }{ 
    e^{f_{\theta}(x_S, y_S)} + \sum_{n \in \mathcal{N}} e^{f_{\theta}(x_n, y_S)} }.
\label{eq:f_defn_supervised}
\end{align}
where by definition, given each possible $S$ we construct the noise index set as $\mathcal{N} = \mathcal{I} \setminus (\mathcal{P} \cup S)$.
\end{definition}

We need a way to estimate the parameters $\theta$ of this tractable model.
Suppose we can observe many samples of $\mathcal{X}, S, \mathcal{P}$ from the true generative model in \eqref{eq:SINCERE_model}, even though we lack direct access to the PDF functions $p^+$ and $p^-$.
In this setting we can fit $f_{\theta}$ by minimizing what we call the ``ideal'' SINCERE loss.

\begin{definition}
The SINCERE loss for the generative model in \eqref{eq:SINCERE_model} is defined as
\begin{align}
    L_{\text{SINCEREideal}}(\theta) = \mathbb{E}_{\mathcal{X}, S, \mathcal{P}} \left[ 
    - \log p_{\theta}( S | \mathcal{X}, \mathcal{P})
    \right],
\label{eq:L_SINCERE_defn_supervised}
\end{align}
where evaluating the expectation assumes ideal access to many iid samples from the generative model.
\end{definition}

This proposed SINCERE loss provides a principled way to fit a tractable neural model $f_{\theta}$ to identify the last remaining member of a target class when given a data set $\mathcal{X}$ and other class member indices $\mathcal{P}$. The definition here covers the idealized case where the expectation is over samples drawn independently from the true generative model.
This would be prohibitively expensive in practice, so we provide a definition that uses mini-batches of a finite labeled dataset for more efficient learning in Sec. \ref{sec:practical}.

\subsubsection{Justification for SINCERE Loss}
\label{sec:justification}

We justify the chosen form of function $f_\theta$ in \eqref{eq:f_defn_supervised} and loss $L_{\text{SINCEREideal}}$ in \eqref{eq:L_SINCERE_defn_supervised} in two steps. First, we suggest it is possible to set parameters $\theta$ to a value $\theta^*$ such that the tractable model $p_{\theta^*}(S | \mathcal{X}, \mathcal{P})$ exactly matches the true distribution $p(S|\mathcal{X}, \mathcal{P})$ in~\eqref{eq:SINCERElikelihood}. This step is formalized in Assumption~\ref{assump:theta_star} below.
Second, we prove this matching parameter $\theta^*$ will be a minimizer of the SINCERE loss. This step is formalized in Theorem~\ref{theorem:theta_star_minimizes_SINCERE}.

\begin{assumption}
\label{assump:theta_star}
The function class of neural network $f_\theta$ is sufficiently flexible, such that there exists parameters $\theta^*$ satisfying $e^{f_{\theta^*}(x_i, y_S)} = \ratpdf{x_i}$ for all possible data vectors $x_i$.
\end{assumption}
Universal approximation theorems for neural networks~\citep{lu2017universalApprox} suggest this function approximation task is achievable. Given a parameter $\theta^*$ meeting this assumption, substituting that in our tractable selection likelihood in \eqref{eq:L_SINCERE_defn_supervised} straightforwardly recovers the true conditional probability in \eqref{eq:SINCERElikelihood}.

\begin{theorem}
\label{theorem:theta_star_minimizes_SINCERE}
Parameters $\theta^*$ that satisfy Assumption~\ref{assump:theta_star} minimize the SINCERE loss defined in \eqref{eq:L_SINCERE_defn_supervised}, where the expectation is over samples of $\mathcal{X}, S, \mathcal{P}$ from the generative model \eqref{eq:SINCERE_model}.
\end{theorem}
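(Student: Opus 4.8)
The plan is to recognize $L_{\text{SINCEREideal}}$ as the expected cross-entropy between the true selection posterior $p(S \mid \mathcal{X}, \mathcal{P})$ of \eqref{eq:SINCERElikelihood} and the tractable model distribution $p_\theta(S \mid \mathcal{X}, \mathcal{P})$ of \eqref{eq:f_defn_supervised}, and then to invoke the non-negativity of the Kullback--Leibler divergence. Concretely, I would first use the tower property to condition on $(\mathcal{X}, \mathcal{P})$:
\begin{align}
L_{\text{SINCEREideal}}(\theta) = \mathbb{E}_{\mathcal{X}, \mathcal{P}} \Big[ \, \mathbb{E}_{S \mid \mathcal{X}, \mathcal{P}} \big[ - \log p_\theta(S \mid \mathcal{X}, \mathcal{P}) \big] \Big],
\end{align}
where the inner expectation draws $S$ from the true posterior of \eqref{eq:SINCERElikelihood}, a legitimate probability mass function over the candidate index set $\mathcal{I} \setminus \mathcal{P}$.

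The key observation for the inner term is that, for each fixed $(\mathcal{X}, \mathcal{P})$, the map $S' \mapsto p_\theta(S' \mid \mathcal{X}, \mathcal{P})$ is itself a probability mass function on $\mathcal{I} \setminus \mathcal{P}$: since every index in $\mathcal{P} \cup \{S'\}$ belongs to the target class, the score $f_\theta(x_i, y_S)$ does not depend on which $S'$ is hypothesized, and the denominator of \eqref{eq:f_defn_supervised} equals the common partition function $Z_\theta = \sum_{i \in \mathcal{I} \setminus \mathcal{P}} e^{f_\theta(x_i, y_S)}$, so $p_\theta(S' \mid \mathcal{X}, \mathcal{P}) = e^{f_\theta(x_{S'}, y_S)} / Z_\theta$ and these values sum to one. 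With both $p$ and $p_\theta$ being distributions on the same finite set, Gibbs' inequality gives
\begin{align}
\mathbb{E}_{S \mid \mathcal{X}, \mathcal{P}} \big[ - \log p_\theta(S \mid \mathcal{X}, \mathcal{P}) \big] = H\big(p(\cdot \mid \mathcal{X}, \mathcal{P})\big) + \KL\big(p(\cdot \mid \mathcal{X}, \mathcal{P}) \,\|\, p_\theta(\cdot \mid \mathcal{X}, \mathcal{P})\big) \geq H\big(p(\cdot \mid \mathcal{X}, \mathcal{P})\big),
\end{align}
with $H$ the Shannon entropy, which is finite because the support has at most $N$ elements. Taking $\mathbb{E}_{\mathcal{X}, \mathcal{P}}$ of both sides produces the uniform lower bound $L_{\text{SINCEREideal}}(\theta) \geq \mathbb{E}_{\mathcal{X}, \mathcal{P}}[H(p(\cdot \mid \mathcal{X}, \mathcal{P}))]$, valid for every $\theta$. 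I would then invoke Assumption~\ref{assump:theta_star}: substituting $e^{f_{\theta^*}(x_i, y_S)} = \ratpdf{x_i}$ into \eqref{eq:f_defn_supervised} makes $p_{\theta^*}(\cdot \mid \mathcal{X}, \mathcal{P})$ coincide exactly with the true posterior \eqref{eq:SINCERElikelihood}, so the KL term vanishes and $L_{\text{SINCEREideal}}(\theta^*) = \mathbb{E}_{\mathcal{X}, \mathcal{P}}[H(p(\cdot \mid \mathcal{X}, \mathcal{P}))]$ attains the lower bound, establishing that $\theta^*$ is a global minimizer.

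I expect the only nontrivial step to be the normalization check in the second paragraph: one must confirm that the $S$-dependent denominator appearing in \eqref{eq:f_defn_supervised} is in fact constant across all hypotheses $S' \in \mathcal{I} \setminus \mathcal{P}$ (equivalently, that the noise set $\mathcal{N} = \mathcal{I} \setminus (\mathcal{P} \cup \{S'\})$ completes the sum in the same way for each $S'$), so that $p_\theta(\cdot \mid \mathcal{X}, \mathcal{P})$ is a bona fide distribution and Gibbs' inequality is applicable; the same remark is needed to see that \eqref{eq:SINCERElikelihood} sums to one. Everything after that is the textbook fact that cross-entropy is minimized at the true conditional distribution, combined with the already-noted consequence of Assumption~\ref{assump:theta_star} that $p_{\theta^*}$ recovers that conditional.
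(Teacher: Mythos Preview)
Your proof is correct and follows essentially the same approach as the paper: both reduce the loss to an expected KL divergence from the true posterior $p(S \mid \mathcal{X}, \mathcal{P})$ to the model $p_\theta(S \mid \mathcal{X}, \mathcal{P})$ and conclude that $\theta^*$ minimizes because it drives that KL to zero. Your explicit normalization check---that the denominator of \eqref{eq:f_defn_supervised} is the $S'$-independent partition function $\sum_{i \in \mathcal{I}\setminus\mathcal{P}} e^{f_\theta(x_i, y_S)}$, so $p_\theta(\cdot \mid \mathcal{X}, \mathcal{P})$ is a genuine PMF and Gibbs' inequality applies---is a detail the paper leaves implicit when it simply cites the theory of maximum likelihood under misspecification.
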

\begin{proof}
The loss minimization objective \eqref{eq:L_SINCERE_defn_supervised} is equivalent to maximizing the tractable log likelihood $p_{\theta}(S | \mathcal{X}, \mathcal{P})$ under samples from the generative model.
The theory of maximum likelihood estimation under model misspecification~\citep{white1982maximum,fan2016mle} shows minimizing~\eqref{eq:L_SINCERE_defn_supervised} is equivalent to minimizing the KL-divergence $\text{\textit{KL}}(p(S | \mathcal{X}, \mathcal{P}) || p_{\theta}(S | \mathcal{X}, \mathcal{P}))$, where $p(S | \mathcal{X}, \mathcal{P})$ (without any subscript) denotes the conditional likelihood of the selection index in~\eqref{eq:SINCERElikelihood} arising from the generative model.
KL-divergence is minimized and equal to 0 when its two arguments are equal.
Parameter vector $\theta^*$ makes the arguments equal by construction, therefore the minimum of the SINCERE loss occurs at the vector $\theta^*$.
\end{proof}

This two-step justification for our proposed SINCERE loss holds both when $\mathcal{P}$ is non-empty, as with SupCon, as well the special case where $\mathcal{P}$ is the empty set, where SINCERE is equivalent to InfoNCE.
These steps formalize the arguments for InfoNCE presented by \citet{oord2019representation} and extend them to handle the more general supervised scenario.
Ultimately, this justification shows that minimizing the SINCERE loss is a principled way to fit a tractable model for both the supervised contrastive classification task and the self-supervised instance discrimination task.
In contrast, the lack of coherent probabilistic model motivating SupCon, as shown in App.~\ref{sec:supcon_compare}, precludes an analogous analysis.

\subsection{Lower Bound on SINCERE Loss}
\label{sec:bounds}

Previous work by \citet{oord2019representation} motivated the self-supervised InfoNCE loss via an information-theoretic bound related to mutual information.
We revisit this analysis for the more general case of SINCERE loss under the idealized settings of Sec.~\ref{sec:derivation}, where there is one target-noise task of interest, with target PDF $\tarpdf{x_i}$ and noise PDF $\noipdf{x_i}$.
These results generalize to InfoNCE loss with the additional assumption that $T = 1$, that is the data set contains only one sample from the target PDF.

In general, by the definition of loss $L_{\text{SINCEREideal}}(\theta)$ in~\eqref{eq:L_SINCERE_defn_supervised} as an expectation of a negative log PMF of a discrete random variable, we can guarantee that $L_{\text{SINCEREideal}}(\theta) \geq 0$. However, we can prove a potentially tighter lower bound that depends on two  quantities that define the difficulty of the contrastive learning task: the number of noise examples $|\mathcal{N}|$ and the symmeterized KL divergence between the two true data-generating distributions: the target distribution $p^+$ and the noise distribution $p^-$.

\begin{theorem}
\label{theorem:KLbound}
For any parameter $\theta$ of the tractable model, 
 let $L_{\text{SINCEREideal}}(\theta)$ denote the ideal SINCERE loss in~\eqref{eq:L_SINCERE_defn_supervised}, computed via expectation over $\mathcal{X}, S, \mathcal{P}$ from the true model in~\eqref{eq:SINCERE_model}. Then we have
\begin{align}
    L_{\text{SINCEREideal}}(\theta) \geq \log |\mathcal{N}| - \left(
    \text{KL}( p^- || p^+ )
    + 
    \text{KL}( p^+ || p^- )
    \right)    
\end{align}
where we recognize the sum of the two KL terms as the symmeterized KL divergence between $p^+$ and $p^-$, the true data-generating PDFs for individual images $x_i \in \mathcal{X}$.

Proof: See App.~\ref{supp:bound}.
\end{theorem}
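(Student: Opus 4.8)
The plan is to start from the definition $L_{\text{SINCEREideal}}(\theta) = \mathbb{E}_{\mathcal{X},S,\mathcal{P}}[-\log p_\theta(S\mid\mathcal{X},\mathcal{P})]$ and lower-bound it by replacing $p_\theta$ with the ideal density ratio model. Concretely, note that $p_\theta$ in \eqref{eq:f_defn_supervised} has the form $e^{f_\theta(x_S,y_S)} / (e^{f_\theta(x_S,y_S)} + \sum_{n\in\mathcal{N}} e^{f_\theta(x_n,y_S)})$. The key observation is that $-\log$ of this quantity, for \emph{any} $\theta$, is at least $-\log$ of the corresponding quantity when the numerator term $e^{f_\theta(x_S,y_S)}$ is dropped from the denominator — no wait, that goes the wrong way. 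Instead, following \citet{oord2019representation}, I would argue that the expected loss is minimized (over all possible score functions) by the ideal choice $e^{f(x_i,y_S)} \propto \ratpdf{x_i}$, since by Theorem~\ref{theorem:theta_star_minimizes_SINCERE} that choice yields $p_{\theta^*}(S\mid\mathcal{X},\mathcal{P}) = p(S\mid\mathcal{X},\mathcal{P})$, the true posterior, which minimizes the cross-entropy. Hence $L_{\text{SINCEREideal}}(\theta) \geq \mathbb{E}_{\mathcal{X},S,\mathcal{P}}[-\log p(S\mid\mathcal{X},\mathcal{P})]$ with $p(S\mid\mathcal{X},\mathcal{P})$ given by \eqref{eq:SINCERElikelihood}, and it suffices to bound this idealized expectation.

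Next I would manipulate $-\log p(S\mid\mathcal{X},\mathcal{P})$ using \eqref{eq:SINCERElikelihood}:
\begin{align}
-\log p(S\mid\mathcal{X},\mathcal{P}) = \log\!\left( 1 + \invsratpdf{x_S}\sum_{n\in\mathcal{N}} \ratpdf{x_n} \right).
\end{align}
I would drop the $1$ (valid since the argument exceeds the summation term alone) to get the bound $-\log p(S\mid\mathcal{X},\mathcal{P}) \geq \log\!\big(\invsratpdf{x_S}\big) + \log\!\big(\sum_{n\in\mathcal{N}} \ratpdf{x_n}\big)$, then apply Jensen's inequality to the second log: $\log\frac{1}{|\mathcal{N}|}\sum_{n\in\mathcal{N}}\ratpdf{x_n} \geq \frac{1}{|\mathcal{N}|}\sum_{n\in\mathcal{N}}\log\ratpdf{x_n}$, so that $\log\sum_{n}\ratpdf{x_n} \geq \log|\mathcal{N}| + \frac{1}{|\mathcal{N}|}\sum_n \log\ratpdf{x_n}$. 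Taking expectations over the generative model \eqref{eq:SINCERE_model}: each $x_n$ is i.i.d.\ from $p^-$, so $\mathbb{E}[\log\ratpdf{x_n}] = \mathbb{E}_{x\sim p^-}[\log\frac{p^+(x)}{p^-(x)}] = -\text{KL}(p^-\|p^+)$; and $x_S$ is from $p^+$, so $\mathbb{E}[\log\invsratpdf{x_S}] = \mathbb{E}_{x\sim p^+}[\log\frac{p^-(x)}{p^+(x)}] = -\text{KL}(p^+\|p^-)$. Assembling these gives $L_{\text{SINCEREideal}}(\theta) \geq \log|\mathcal{N}| - \text{KL}(p^+\|p^-) - \text{KL}(p^-\|p^+)$, exactly the claimed inequality.

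The main obstacle I anticipate is the first step — rigorously justifying that the ideal density-ratio score minimizes the expected loss among all score functions, so that we may lower-bound by the idealized posterior expectation. Theorem~\ref{theorem:theta_star_minimizes_SINCERE} establishes this via the MLE-under-misspecification / KL argument, so I would invoke it directly rather than re-deriving it; the remaining concern is a minor measurability/integrability check that the expectations of $\log\ratpdf{x}$ are finite, which follows from the assumption that $p^-$ is nonzero wherever $p^+$ is and that both KL divergences are assumed finite (otherwise the bound is vacuous). A secondary subtlety is that $|\mathcal{N}| = N - T$ is a fixed constant under the generative model (since $T$ and $N$ are fixed), so $\log|\mathcal{N}|$ comes out of the expectation cleanly; I would note this explicitly. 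The rest is routine: Jensen, linearity of expectation, and the i.i.d.\ structure of the noise draws.
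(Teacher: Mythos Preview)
Your proposal is correct and follows essentially the same route as the paper's proof: lower-bound $L(\theta)$ by the loss at the ideal density-ratio parameter $\theta^*$ (invoking Theorem~\ref{theorem:theta_star_minimizes_SINCERE}), rewrite $-\log p(S\mid\mathcal{X},\mathcal{P})$ as $\log(1+\cdots)$, drop the $1$, apply Jensen to $\log\sum_n \ratpdf{x_n}$, and then take expectations using $x_S\sim p^+$ and $x_n\sim p^-$ i.i.d.\ to produce the two KL terms. The paper organizes the post-Jensen computation into separate ``Term A'' and ``Term B'' blocks and explicitly notes that $|\mathcal{N}|=N-T$ is constant under Assumption~\ref{assump:super}, which matches the subtlety you flagged.
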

We emphasize that a proper bound is guaranteed throughout all steps in the proof. Previous bounds for InfoNCE loss by \citet{oord2019representation} required some steps where the bound held only approximately.
App.~\ref{supp:bound} provides further detail and interpretation. 

The bound sensibly suggests that the minimizing loss value should increase as the number of noise samples $|\mathcal{N}|$ grows, because the model has a harder selection task due to choosing among more alternatives.
If $p^+$ and $p^-$ are known, this bound indicates what loss values are achievable based on the separability of the distributions. 
If the right hand side of the bound evaluates to a negative number, a tighter bound is possible by invoking the fact that $L_{\text{SINCEREideal}}(\theta) \geq 0$. We caution that evaluations of $L_{\text{SINCEREideal}}(\theta)$ will be inexact in practice due to approximations of the ideal expectation.

This bound can also be used when $p^+$ and $p^-$ are unknown.
As a corollary, a concrete numerical value of $\log |\mathcal{N}| - L_{\text{SINCEREideal}}(\theta)$ can be interpreted as a lower bound on the symmetrized KL between the unknown $p^+$ and $p^-$.
Thus, a well-optimized model fit by minimizing SINCERE can provide a bound on a notion of divergence even when $p^+$ and $p^-$ are not available and difficult to estimate directly due to high-dimensionality of the data space.
We find it interesting that a loss computed from the scalar output of a neural net trained to solve a selection task be used to bound the divergence between distributions over a much higher dimensional data space.
Similar bounds have been reported for variational divergence minimization in generative-adversarial networks \citep{NIPS2016_cedebb6e}.

The previous mutual information bound for InfoNCE \citep{oord2019representation} has motivated further theoretic investigations of contrastive losses \citep{LEE2024106584, wu2020mutual} and new contrastive learning methods \citep{yang2022mutual, tian2020makes, 4068071, sordoni2021decomposed}, particularly leading to applications with graph data \citep{xu2024graph}, 3D data \citep{10.1007/978-3-030-58526-6_37}, and federated learning \citep{louizos2024mutual}.
Our proposed KL divergence bound enables future work to utilize the relationship between target and noise distributions of supervised and self-supervised tasks in addition to the existing mutual information bound relating data samples and self-supervised instance discrimination labels.

\subsection{SINCERE Loss in Practice}
\label{sec:practical}

The above analysis assumes a single target distribution of interest.
In practice, a supervised classification problem with $K$ classes requires learning $K$ target-noise separation tasks.
Following \citet{Khosla_SupContrast}, we assume one shared function $f_{\theta}$ approximates all $K$ target-noise tasks for simplicity.

To approximate the expectation needed for the SINCERE loss in practice, we average over stochastically-sampled mini-batches $(\mathcal{X}_b, \mathcal{Y}_b)$ of fixed size $N$ from a much larger labeled data set.
These mini-batches are constructed from $N/2$ images that then have 2 randomly sampled data augmentations applied to them.
This incorporates terms similar to self-supervised instance discrimination \citep{SimCLRv1} into SINCERE, as there will be $N$ terms where 2 augmentations of the same image form the target distribution samples.

Within each batch, we allow each index a turn as the selected target index $S$.
For that turn, we define the target distribution as examples with class $y_S$ and the noise distribution as examples from other classes.

We thus fit neural net weights $\theta$ by minimizing this expected loss over batches:
\begin{align}
&\mathbb{E}_{\mathcal{X}_b, \mathcal{Y}_b}
\left[ \sum_{S = 1}^N \sum\limits_{p \in \mathcal{P}} \frac{1}{N|\mathcal{P}|} L_\text{SINCERE}(z_S, z_p)
\right],
\\ \notag
&L_{\text{SINCERE}}(z_S, z_p) =
    -\log
    \frac{e^{z_S \cdot z_p / \tau}}
    {e^{z_S \cdot z_p / \tau} + \sum_{n \in \mathcal{N}} e^{z_n \cdot z_p / \tau}}.
\end{align}
Our implementation of SINCERE uses the cosine similarity proposed by \citet{Wu_2018_CVPR} and averages over all same-class partners in $\mathcal{P}$. 
Other choices of similarity functions or pooling could be considered in future work.
SINCERE and SupCon have the same complexity in both speed and memory, as detailed in App.~\ref{sec:complexity}.

Averaging over the elements of $\mathcal{P}$ nonparametrically represents the target class $y_S$ and encourages each $z_p$ to have an embedding similar to its same-class partner $z_S$.
However, no member of $\mathcal{P}$ ever appears in the denominator without also appearing in the numerator.
This avoids any repulsion between two members of the same class in the embedding space seen with SupCon loss.
Intuitively, our SINCERE loss restores NCE's assumption that the input used in the numerator belongs to the target distribution while all other inputs in the denominator belong to the noise distribution.

\subsection{Analysis of Gradients}
\label{sec:loss_grad}

We study the gradients of both SINCERE and SupCon to gain additional understanding of their relative properties.

The gradient of the SINCERE loss with respect to $z_p$ is
\begin{align}
\label{eq:SINCERE_grad}
\frac{z_S}{\tau} \left(
    \frac{e^{z_S \cdot z_p / \tau}}
    {\sum_{j \in \mathcal{N} \cup \{S\}} e^{z_j \cdot z_p / \tau}} - 1
    \right)
    + &
    \frac{\sum_{n \in \mathcal{N}} \frac{z_n}{\tau} e^{z_n \cdot z_p / \tau}}
    {\sum_{j \in \mathcal{N} \cup \{S\}} e^{z_j \cdot z_p / \tau}}.
\end{align}
The first term has a \emph{negative} scalar times $z_S$.
The second term has a \emph{positive} scalar times each noise embedding $z_n$.
Thus each gradient descent update to $z_p$ encourages it to move \emph{towards} the other target embedding $z_S$ and \emph{away} from each noise embedding $z_n$.
The magnitude of these movements is determined by the softmax of cosine similarities.
For a complete derivation and further analysis, see App.~\ref{supp:gradient}.

This behavior is different from the gradient dynamics of SupCon loss. 
\citet{Khosla_SupContrast} provide SupCon's gradient with respect to $z_p$ as
\begin{equation}
\frac{z_S}{\tau} \left(
    \frac{e^{z_S \cdot z_p / \tau}}
    {\sum_{i \in \mathcal{I}} e^{z_i \cdot z_p / \tau}} - \frac{1}{|\mathcal{P}|}
    \right)
    +
    \frac{\sum_{n \in \mathcal{N}} \frac{z_n}{\tau} e^{z_n \cdot z_p / \tau}}
    {\sum_{i \in \mathcal{I}} e^{z_i \cdot z_p / \tau}}.
    \label{eq:supcon_grad}
\end{equation}
The scalar multiplying $z_S$ in \eqref{eq:supcon_grad} will be in the range $[\frac{-1}{|\mathcal{P}|}, 1 - \frac{1}{|\mathcal{P}|}]$. The possibility of positive values implies intra-class repulsion: $z_p$ could be \emph{pushed away} from $z_S$ when applying gradient descent.
In contrast, the scalar multiplier for $z_S$ will always by in $[-1, 0]$ for SINCERE in \eqref{eq:SINCERE_grad}, which effectively performs hard positive mining \citep{Schroff_2015_CVPR}.
For further analysis of SupCon's gradient, see App.~\ref{supp:supcon_gradient}.

\subsection{Related Work}
\label{sec:rel_work}

\subsubsection{Sampling Bias in Unsupervised Instance Discrimination}

For self-supervised instance discrimination, 
\citet{NEURIPS2020_63c3ddcc} describe a problem they call ``sampling bias,'' where several instances in the same batch treated as negative examples may, in reality, have the same unobserved supervised class label as the target instance.
For example, in standard InfoNCE two distinct ``dog'' images will have their embeddings repelled from each other and drawn towards only their augmentation.
Sampling bias induces the same effect on embeddings as the intra-class repulsion issue we raise in this work, but is driven by the fact that supervised classification labels are unknown, instead of improper formulation of a supervised loss when labels are known.

Interestingly, both our work and several others propose revised contrastive objectives that adjust a softmax denominator.  \citet{NEURIPS2020_63c3ddcc} suggest a \emph{debiased contrastive} objective that subtracts additional target distribution terms from InfoNCE's denominator to debias the noise distribution. 
In contrast, SINCERE removes extraneous target distribution terms from SupCon's denominator.
\citet{DecoupledCL} propose a \emph{decoupled} objective that removes all target distribution terms from the denominator, seeking to improve learning efficiency without the need for large batches or many epochs.
Future work could investigate if this idea works in a supervised context or has a probabilistic justification.

\citet{arora2019theoretical} include sampling bias in their generative model for self-supervised contrastive learning via latent class random variables. Their contribution is to describe how a sufficiently large number of samples used in self-supervised contrastive learning can enable strong transfer learning performance in downstream supervised tasks, despite the bias. They require the assumption that the explicit class labels of the downstream task match the latent classes of the self-supervised task.
In contrast, we model self-supervised and supervised contrastive learning as separate tasks to examine how the former can be generalized to latter.

\subsubsection{Supervised Contrastive Losses}

Several works have expanded on SupCon loss in order to apply it to new problems.
\citet{feng2022rethinking} limit the target and noise distributions to K-nearest neighbors to allow for multi-modal class distributions.
\citet{kang2021exploring} explicitly set the number of samples from the target distribution to handle imbalanced data sets.
\citet{Li_2022_CVPR} introduce a regularization to push target distributions to center on uniformly distributed points in the embedding space.
\citet{Yang_2022_CVPR} and \citet{Li_2022_CVPR_2} utilize pseudo-labeling to address semi-supervised learning and supervised learning with noisy labels respectively.
SINCERE loss can easily replace the use of SupCon loss in these applications.

Terms similar to the SINCERE loss have previously been used as a part of more complex losses.
\citet{pmlr-v162-chen22d} utilizes a loss like our SINCERE loss as one term of an overall loss function meant to spread out embeddings that share a class.
Detailed discussion or motivation for the changes made to SupCon loss is not provided. They do not identify or discuss the intra-class repulsion issue that motivates our work.

\citet{barbano2023unbiased} proposed the $\epsilon$-SupInfoNCE loss to do supervised contrastive learning on datasets which are ``biased'' in the sense that some visual features spuriously correlate with class labels in available data but don’t characterize the true data distribution.
Motivated by metric learning, their loss seeks to ensure that a target-target pair of embeddings has cosine similarity at least $\epsilon$ larger than the closest target-noise pair, where $\epsilon > 0$ is their margin hyperparameter.
Writing this goal as a maximum over target-noise pairs that is smoothly approximated by a LogSumExp function leads to their proposed loss, written in our notation as
\begin{align}
L_{\epsilon\text{-SupInfoNCE}}(z_S, z_p) =
    -\log
    \frac{e^{z_S \cdot z_p / \tau}}
    {e^{z_S \cdot z_p / \tau - \epsilon} + \sum_{n \in \mathcal{N}} e^{z_n \cdot z_p / \tau}}.
\end{align}
Functionally, this loss is equivalent to our SINCERE loss when $\epsilon=0$, though \citeauthor{barbano2023unbiased} advocate for larger $\epsilon$ and in fact do not try $\epsilon = 0$ for image classification in their Table 8.

\citet{barbano2023unbiased} do not identify the intra-class repulsion issue that motivates our work and do not establish any probabilistic modeling foundations for their proposed loss.
They justify removing SupCon's problematic target-target terms from the denominator only by calling these ``non-contrastive.'' 
Ultimately, \citet{barbano2023unbiased} focuses on using $\epsilon$-SupInfoNCE loss with an additional regularization loss for ``debiasing,'' avoiding spurious correlations.
Given this goal, their experiments do not investigate differences between $\epsilon$-SupInfoNCE loss and SupCon loss beyond supervised classification accuracy.
Our transfer learning experiments in Sec.~\ref{sec:transfer} find that hyperparameter search over non-zero $\epsilon$ values does not improve accuracy over SINCERE, where $\epsilon=0$, on a majority of datasets.


\section{Experiments}
We compare our proposed SINCERE to two earlier supervised contrastive losses: SupCon~\citep{Khosla_SupContrast} and $\epsilon$-SupInfoNCE \citep{barbano2023unbiased}.
Sec.~\ref{sec:embed} shows how SINCERE separates the target and noise distributions in the learned embedding space better than SupCon.
We do not compare against $\epsilon$-SupInfoNCE for this experiment as it does not use a softmax-based formulation and therefore does not have comparable similarity values.
Sec.~\ref{sec:transfer} evaluates transfer learning with a linear classifier, finding SINCERE outperforms SupCon on all 6 tested data sets and $\epsilon$-SupInfoNCE on 4 out of 6 tested data sets, even when $\epsilon$-SupInfoNCE is allowed to tune the value of its additional $\epsilon$ hyperparameter not present in SINCERE.
Together, these two results support our main claims: that SINCERE repairs the intra-class repulsion issue of SupCon and offers representations that generalize well to new tasks.

Additional experiments are included in the appendix.
For standard supervised classification, App.~\ref{supp:linear_results} and App.~\ref{supp:sup_class_acc} suggest SINCERE is just as good as SupCon and $\epsilon$-InfoNCE, with no statistically significant difference in  accuracy across contrastive losses with linear probing and k-nearest neighbor classifiers respectively.
We argue this is unsurprising given Sec.~\ref{sec:embed}: even though SINCERE learns greater separation between target and noise, SupCon separates them well enough to get similar accuracy.
App.~\ref{sec:train_loss} compares numerical values of SINCERE and SupCon losses after training, finding that intra-class repulsion raises the loss value at the estimated minima.

All experiments use a ResNet-50 architecture \citep{he_resnet_2016}, following \citet{Khosla_SupContrast}.
App.~\ref{sec:training} provides details of the training process to aid in reproducing results with our shared code.
All tables bold results only when they are statistically significant based on the 95\% confidence interval of the accuracy difference \citep{acc_comparison} from 1,000 iterations of test set bootstrapping.

Our embedding models are trained on CIFAR-10, CIFAR-100 \citep{cifar}, and ImageNet-100 \citep{tian2020contrastive, imagenet100pytorch} datasets, in all cases working with $32 \times 32$ pixel RGB images for expediency.
A subset of CIFAR-10 containing only cat and dog images, referred to as CIFAR-2 here, was selected to evaluate binary classification.
SupCon loss' problematic intra-class repulsion should be most pronounced on CIFAR-2 due to having the largest number of images sharing the same class.

\subsection{Target-Noise Separation in Learned Embedding Space}
\label{sec:embed}

\begin{figure}[t]
\centering
\includegraphics[width=0.48\textwidth]{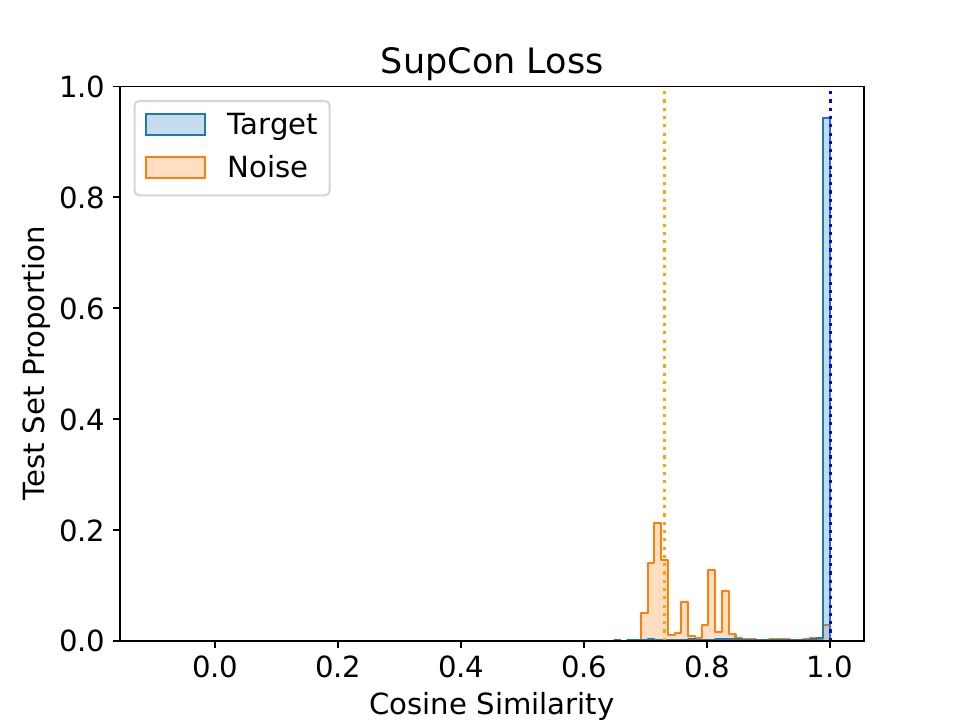}
\includegraphics[width=0.48\textwidth]{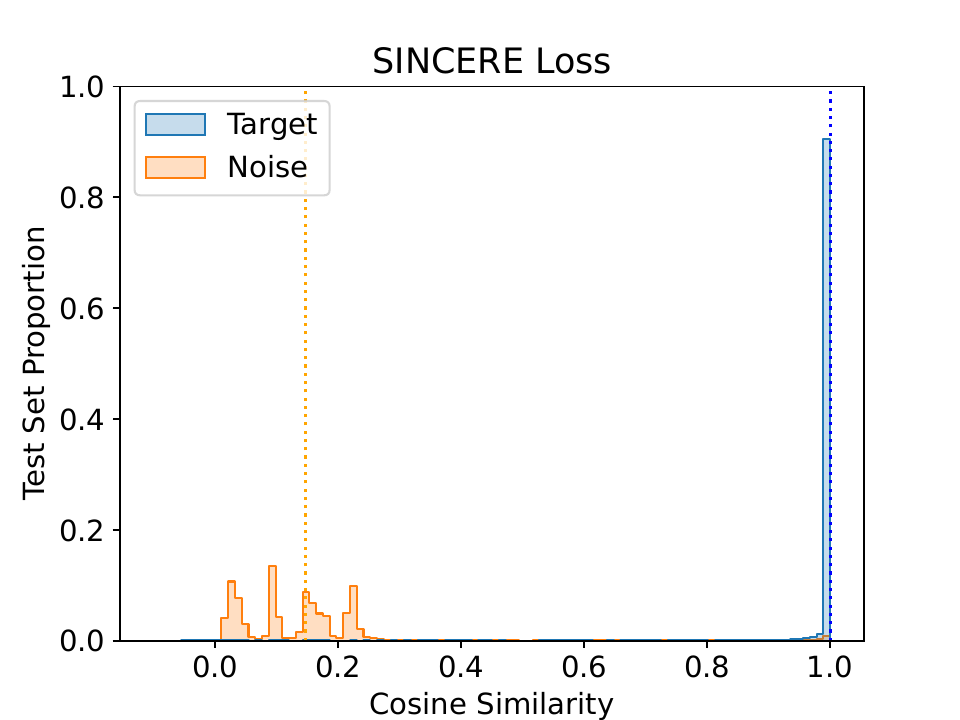}
\caption{
Histograms of cosine similarity values for CIFAR-10 test set nearest neighbors, comparing SupCon (left) and SINCERE (right).
We plot the similarity of each test image to the nearest target image in the training set as well as the nearest noise image in the training set.
The vertical dotted lines visualize the median similarity value.
Our SINCERE loss reduces similarity to the nearest noise image by a substantial amount, thereby improving target-noise separation. 
}
\label{fig:cos_hist}
\end{figure}
\begin{table*}[!t]
\centering
\begin{tabular}{c|c|c|c|c}
\toprule
Training Loss & CIFAR-2 & CIFAR-10 & CIFAR-100 & ImageNet-100 \\
\midrule
SupCon        & 0.410   & 0.270     & 0.438      & 0.026         \\
SINCERE       & \textbf{0.972}    & \textbf{0.854}     & \textbf{0.454}      & \textbf{0.154}         \\
\bottomrule
\end{tabular}
\caption{
Margin between median cosine similarity values for the test set target and noise distributions.
The margin is calculated as the difference between each distribution's median cosine similarity, or the distance between vertical lines in Figure~\ref{fig:cos_hist}.
SINCERE increases the size of the margin on all data sets.
}
\label{table:cos_sim}
\end{table*}

We examine the decision process for a 1 nearest-neighbor classifier in the learned embedding space to investigate how well each loss achieves target-noise separation.
In Figure \ref{fig:cos_hist} we visualize how similarities of embedding pairs differ when using nearest neighbor representatives of both target and noise classes.
For each CIFAR-10 test set image, we plot similarity to its nearest neighbor from the train set target and train set noise distributions, labeled as ``Target'' and ``Noise'' respectively.

Examining Fig.~\ref{fig:cos_hist}, we find that both SINCERE and SupCon losses succeed at maximizing the cosine similarity of target-target pairs, with histograms very close to the max value of 1. 
However, SINCERE noticeably lowers the cosine similarity of target-noise pairs, with values in the 0-0.25 range instead of SupCon's 0.7-0.85 range.
This visualization shows that SINCERE achieves a wider margin of target-noise separation on average across all 10 classes.
App. \ref{supp:cos_hist_class} provides visualizations of class-specific histograms; we find each class' behavior mimics the aggregate findings here of wider target-noise separation for SINCERE.

Table \ref{table:cos_sim} quantifies this increase in separation by measuring the margin between the median similarity values of the target and noise distribution.
SINCERE loss leads to a larger margin for all data sets.
This confirms our intuitive picture in Fig.~\ref{fig:loss_vis} and the analysis of gradients in Sec. \ref{sec:loss_grad}: SupCon's problematic inclusion of some target images as part of the noise distribution reduces target-noise separation compared to SINCERE. 

SupCon and SINCERE both have a positive margin between target and noise nearest neighbors, leading to both methods creating accurate classifiers on all tested datasets, as shown in App.~\ref{supp:linear_results} and App.~\ref{supp:sup_class_acc}.
Intuitively, even though SINCERE desirably improves target-noise separation, it does not produce notably different classification decisions on the tested data sets; SupCon appears to separate ``well enough''.

\subsection{Transfer Learning}
\label{sec:transfer}

\begin{table*}[!t]
\centering
\begin{tabular}{c|c|c|c|c|c|c}
\toprule
Training Loss & Pet-37      & DTD-47      & Aircraft-100 & Food-101      & Flowers-102   & Cars-196    \\ \midrule
SupCon        & 53.91          & 50.73          & 38.60           & 62.91          & 64.96          & 46.14          \\
$\epsilon$-SupInfoNCE & \textbf{57.00}          & \textbf{55.00}          & 42.62          & 64.07          & 66.17          & 51.74          \\
SINCERE       & 55.77 & 51.41 & \textbf{44.88}  & \textbf{64.32} & \textbf{68.08} & \textbf{53.04} \\ \bottomrule
\end{tabular}
\caption{
Top-1 accuracy for transfer learning from ImageNet-100 using 32$\times$32 resolution images.
We use $\epsilon{=}0.25$ for $\epsilon$-SupInfoNCE, selected via hyperparameter search.
Number of classes in each data set appended if not part of the data set's name.
Bolded results are statistically significant based on the bootstrapped 95\% confidence interval of the accuracy difference.
The original SupCon paper \citep{Khosla_SupContrast} reports results for larger image resolutions, infeasible without an industrial GPU, and thus their numbers are not directly comparable.
}
\label{table:transfer_acc}
\end{table*}

Transfer learning aims to increase performance on a target task by utilizing knowledge from a different source task \citep{zhuang_comprehensive_2021}.
We evaluate how supervised training on a source task with different contrastive losses impacts target task performance.
We utilize linear probing, which trains a linear classifier with the frozen embedding function from the source task model, to evaluate how well each loss's learned embedding function generalizes to new target tasks.

We choose classification on ImageNet-100 \citep{tian2020contrastive, imagenet100pytorch} as our source task to evaluate transfer learning with a large source data set.
Images are resized to 32 by 32 pixels to accommodate a 512 batch size without exceeding GPU memory constraints.
Some previous works \citep{SimCLRv1, Chen_2021_MOCO3,Khosla_SupContrast} utilize much larger 224 by 224 pixel images for training on large GPU clusters at an industrial-scale research company. This resolution difference makes our results not directly comparable.
Fully training a model on ImageNet-100 even at $32 \times 32$ took one week of computation on the hardware available at our academic institution, outlined in App.~\ref{sec:training}.

Table~\ref{table:transfer_acc} reports the accuracy for linear probing on various target data sets with ImageNet-100 as the source data set.
SINCERE outperforms SupCon on every data set tested, suggesting that greater target-noise separation on the source task enables better linear classification in the same embedding space for target tasks.

We also compare against $\epsilon$-SupInfoNCE \citep{barbano2023unbiased}, which is more flexible than SINCERE due to the additional $\epsilon$ margin hyperparameter.
$\epsilon$-SupInfoNCE reduces to SINCERE loss when $\epsilon = 0$, but this setting for $\epsilon$ was not evaluated in previous work.
We limited the hyperparameter search for $\epsilon$ to $[0.1, 0.25, 0.5]$, as was done in \citet{barbano2023unbiased}, to avoid similarity in losses as $\epsilon$ approaches $0$.

SINCERE improves accuracy on 4 out of the 6 target data sets tested: FGVC-Aircraft \citep{maji13fine-grained}, Food-101 \citep{bossard_food-101_2014}, Flowers-102 \citep{nilsback_automated_2008}, and Cars \citep{krause20133d}. On two other data sets,  Pets \citep{parkhi_cats_2012} and the Describable Textures Dataset \citep{cimpoi_describing_2014}, $\epsilon$-SupInfoNCE performs best.
This shows an additional hyperparameter search for values of $\epsilon > 0$ can enable $\epsilon$-SupInfoNCE to outperform SINCERE on some data sets.
However, the better overall performance of SINCERE suggests that expensive hyperparameter tuning is often unnecessary.

\section{Discussion}
The proposed SINCERE loss is a theoretically motivated loss for supervised noise contrastive estimation.
Compared to the previous SupCon loss for the same task, SINCERE eliminates problematic repulsion of examples that share a class label while delivering  better target-noise separation and competitive downstream accuracy.
For practitioners, SINCERE loss can be a drop-in replacement for SupCon loss, though we do suggest carefully refitting loss-weight hyperparameters for multi-term losses due to SINCERE's broader range of values.





We acknowledge several limitations in the scope of this work due to time and budget constraints.
Our experiments focus on supervised contrastive methods for image classification with a ResNet \citep{he_resnet_2016} architecture.
Architectures such as the vision transformer (ViT) \citep{dosovitskiy2021an} may slightly improve performance, but are more difficult to successfully train \citep{Chen_2021_MOCO3}.
We do not evaluate on other data modalities such as text \citep{gunel2021supervised, jiang2021interpretable} or graph \citep{you2020graph, you2021graph, xu2021infogcl} data, although InfoNCE and SupCon have been used successfully in those domains.
Due to our focus on representation learning, we do not compare against methods such as cross-entropy loss that do supervised classification without explicitly manipulating the embedding space.

Future work may explore an alternative supervised loss which predicts all members of the target distribution at once instead of individually.
A naive approach to this problem would involve an exponential increase in the number of terms in the denominator, but could potentially model higher-order interactions between sets of samples instead of averaging over pair-wise interactions as is done currently.
Other works may examine intentionally introducing a repulsion between target examples, such as jointly modeling instance discrimination and supervised contrastive similarities.
Further investigation is also possible for how self-supervised methods with similar loss structures, such as BYOL \citep{BYOL} and Decoupled Contrastive Learning \citep{DecoupledCL}, could be derived by defining self-supervised probabilistic models similar to ours.

\bibliography{main}
\bibliographystyle{tmlr}

\appendix

\renewcommand*\contentsname{Appendix Contents}
\tableofcontents

\addtocontents{toc}{\protect\setcounter{tocdepth}{1}}

\section{Notation Reference}
\label{supp:notation}
\begin{table*}
\begin{tabular}{ll}
Notation & Definition \\ \midrule
$N$ & number of elements in the data set \\ \hline
$K$ & number of labels such that $2 \leq K \leq N$ \\  \hline
$(\mathcal{X}, \mathcal{Y})$ & observed data set with $N$ elements \\  \hline
$\mathcal{X} = (x_1, x_2, ..., x_N)$ & data (e.g. images) \\  \hline
$\mathcal{Y} = (y_1, y_2, ..., y_N)$ & categorical labels in $\llbracket 1, K \rrbracket$ \\  \hline
$\mathcal{I} = \llbracket 1, N \rrbracket$ & set of indices for elements in the data set \\  \hline
$D$ & dimensionality of neural network embeddings \\  \hline
$z_i \in \mathbb{R}^D$ & neural network unit vector embedding of $x_i$ \\  \hline
$\mathcal{T} $ & indices for target distribution samples \\  \hline
$\mathcal{N} = \mathcal{I} \setminus \mathcal{T}$ & indices for noise distribution samples \\  \hline
$T$ & defines the number of samples from the target distribution  \\ & such that $2 \leq T \leq N - 1$ \\  \hline
$\mathcal{P} \in \{I \subset \mathcal{I} \mid |I| = T - 1\}$ & random variable for the set of indices for same-class \\ & partners for $t$ \\  \hline
$S \in \mathcal{I} \setminus \mathcal{P}$ & random variable defining the index of the target sample of \\ & interest \\  \hline
$f(x_i, y_j)$ & score function outputting a scalar score representing how \\ & well data $x_i$ matches class representation $y_j$ \\  \hline
$\tau$ & temperature hyperparameter, typically about 0.1 in \\ & practice \\  \hline
$i \in \mathcal{I}$ & index for arbitrary element of the data set \\  \hline
$p \in \mathcal{P}$ & index from the partners for $t$ \\  \hline
$n \in \mathcal{N}$ & index from the noise distribution \\  \hline
$j$ & index used for clarity when another index notation already \\ & used, such as nested summations \\  \hline
$\tarpdf{x_i}$ & target data likelihood \\  \hline
$\noipdf{x_i}$ & noise data likelihood \\  \hline
$p(\mathcal{X} | S)$ & data generating model for the self-supervised case \\  \hline
$p(S | \mathcal{X})$ & likelihood index $S$ is the index of the target class sample \\ & for the self-supervised case \\  \hline
$p(\mathcal{X} | \mathcal{P}, S)$ & data generating model for the self-supervised case \\  \hline
$p(S | \mathcal{X}, \mathcal{P})$ & likelihood $S$ is the index of the last target class sample for \\ & the supervised case \\  \hline
$\theta$ & parameters of neural network $f_\theta$ \\  \hline
$f_\theta(x_i, y_S)$ & neural network used in the tractable model of the index \\ & likelihoods
\end{tabular}
\caption{Notation reference with abridged definitions.}
\label{table:notation}
\end{table*}

See Table \ref{table:notation}.

\section{Proof of Proposition~\ref{prop:SINCEREsuper}}
\label{supp:proof_of_prop_super}

Given the assumed model in \eqref{eq:SINCERE_model}, we wish to show that the probability that a specific index $S$ is the last remaining index of the target class is
\begin{align}
    p( S | \mathcal{X}, \mathcal{P})
    &= \frac{ \ratpdf{x_S} }{ \ratpdf{x_S} + \sum_{n \in \mathcal{N}} \ratpdf{x_n}}
\end{align}
where the set of ``negative'' indices is defined as $\mathcal{N} = \{1, 2, \ldots N\} \setminus ( \mathcal{P} \bigcup \{S\} )$. 
We emphasize that $\mathcal{N}$ is determined by $\mathcal{P}$, the set of positive indices (other examples of the target class).

\begin{proof}
We can define the joint over $S$ and $\mathcal{P}$ given data set $\mathcal{X}$ via Bayes' rule manipulations
\begin{align}
p( S, \mathcal{P} | \mathcal{X} )
&= \frac{p( \mathcal{X}, S, \mathcal{P} )}{ p( \mathcal{X} ) }
 = \frac{
    p( \mathcal{X} | S, \mathcal{P}) p( S, \mathcal{P} )
}{
     p( \mathcal{X})
}
\end{align}
Plugging in basic model definitions from \eqref{eq:SINCERE_model} into the numerator, 
and using shorthand $u > 0$ to represent the uniform probability mass produced by evaluating $p( S, \mathcal{P})$ at any valid inputs, we have
\begin{align}
p( S, \mathcal{P} | \mathcal{X} )
=
\frac{
    \prod_{i \in \mathcal{P} \bigcup \{S\}} \tarpdf{x_i} \prod_{n \in \mathcal{N}} \noipdf{x_n}
    \cdot u
    }{
    \sum_{\mathcal{R} \in \mathbb{P}_T}
    \left( 
        \prod_{r \in \mathcal{R}} \tarpdf{x_r} \prod_{m \in \mathcal{I}\setminus \mathcal{R}} \noipdf{x_m}
    \cdot u
    \right)
    }
\end{align}
where $\mathbb{P}_T$ denotes the set of all possible subsets of indices $\mathcal{I}$ with size exactly equal to $T$.

Next, apply two algebraic simplifications. First, cancel the $u$ terms from both numerator and denominator. 
Second, multiply both numerator and denominator by $\prod_{a \in \mathcal{I}} \frac{1}{p^-(x_a)}$ (a legal move with net effect of multiply by 1). After grouping each product into terms with ratio of $p^+ / p^-$ (which remain) and terms with $p^- / p^-$ (which cancel away), we have
\begin{align}
p( S, \mathcal{P} | \mathcal{X} )
&=
\frac{
        \prod_{i \in \mathcal{P} \bigcup \{S\}} \ratpdf{x_i} 
    }{
    \underbrace{\sum_{\mathcal{R} \in \mathbb{P}_T}
    \left( 
        \prod_{r \in \mathcal{R}} \ratpdf{x_r}
    \right)}_{\Omega}
    }
= \frac{ 1}{\Omega}
        \prod_{i \in \mathcal{P} \bigcup \{S\}} \ratpdf{x_i}
\end{align}
For convenience later, we define the denominator of the right hand side as $\Omega$, which is a constant with respect to $S$ and $\mathcal{P}$.

Now, we wish to pursue our goal conditional of interest: $p( S | \mathcal{P}, \mathcal{X})$. Using Bayes rule on the joint $p(S, \mathcal{P} | \mathcal{X})$ above, we have
\begin{align}
    p(S | \mathcal{P}, \mathcal{X}) 
    &= \frac{
        p( S, \mathcal{P} | \mathcal{X} )
    }{
        p( \mathcal{P} | \mathcal{X} )    
    }
    \\
    &= \frac{
        p( S, \mathcal{P} | \mathcal{X} )
    }{
        \sum_{j \in \mathcal{I} \setminus \mathcal{P} } p(S=j, \mathcal{P} | \mathcal{X} )    
    } 
    \\
    &= 
        \frac{
        \frac{1}{\Omega}
            \prod_{i \in \mathcal{P} \bigcup \{S\}} \ratpdf{x_i} 
        }{
        \frac{1}{\Omega}
            \sum_{j \in \mathcal{I} \setminus \mathcal{P}} 
            \prod_{\ell \in \mathcal{P} \bigcup \{j\}} \ratpdf{x_\ell} 
        }
\end{align}
Finally, canceling terms that appear in both numerator and denominator (the $\Omega$ term as well as the product over $\mathcal{P}$), this leaves
\begin{align}
    p(S | \mathcal{P}, \mathcal{X}) 
    &= 
        \frac{
            \ratpdf{x_S} 
        }{
            \sum_{j \in \mathcal{I} \setminus \mathcal{P}}  
            \ratpdf{x_j} 
        }
   = 
        \frac{
            \ratpdf{x_S} 
        }{
        \ratpdf{x_S} + 
            \sum_{n \in \mathcal{N}}  
            \ratpdf{x_n} 
        }
\end{align}
Where the last statement follows because $\mathcal{I} \setminus \mathcal{P} = S \bigcup \mathcal{N}$ by definition of $\mathcal{N}$.
We have thus reached the desired statement of equality. 
\end{proof}

\section{Proof of Bound Relating SINCERE to Negative KL in Theorem~\ref{theorem:KLbound}}
\label{supp:bound}

Assume the target class is known and fixed throughout this derivation. Further assume that both the target and noise distribution provide support over all possible data inputs $x$, so $p^+(x) > 0$ and $p^-(x) > 0$.

We start with the definition of the loss as an expected negative log likelihood of the selected index $S$ from \eqref{eq:L_SINCERE_defn_supervised}.
\begin{align}
L(\theta) &= 
\mathbb{E}_{\mathcal{X}, S, \mathcal{P} \sim p_{true}}
	[ 
	- \log p_{\theta}( S | \mathcal{P}, \mathcal{X} )
	]
\end{align}
where the expectation is with respect to samples $\mathcal{X}, S, \mathcal{P}$ from the joint of the ``true'' model defined in \eqref{eq:SINCERE_model}.
We denote the loss as $L(\theta)$ in this section to emphasize that the proof applies to both SINCERE and InfoNCE losses.

Recall the optimal tractable model with weights $\theta^*$ defined via target-to-noise density ratios in Prop.~\ref{assump:theta_star}. The loss at this parameter is a lower bound of the loss at any parameter: $L(\theta) \geq L(\theta^*)$.

Now, substituting the definition of $\theta^*$, we find that by simplifying via algebra
\begin{align}
L(\theta) \geq L(\theta^*) &= 
\mathbb{E}_{\mathcal{X}, S, \mathcal{P}}
	[ 
	- \log p_{\theta^*}( S |  \mathcal{P}, \mathcal{X})
	]
	\\ \notag
	&= 
\mathbb{E}_{\mathcal{X}, S, \mathcal{P}}
	[
	- \log \frac{
		\sratpdf{x_S}
	}{
		\sratpdf{x_S} + \sum_{n \in \mathcal{N}} \sratpdf{x_n}
	} 
	]
	\\ \notag
	&= \mathbb{E}_{\mathcal{X}, S, \mathcal{P}}
	[
	- \log \frac{1
	}{
		1 + \frac{1}{\sratpdf{x_S}} \sum_{n \in \mathcal{N}} \sratpdf{x_n}
	} 
	]
    \\ \notag
	&= \mathbb{E}_{\mathcal{X}, S, \mathcal{P}}
	[
	\log \left( 1 + \invsratpdf{x_S} \sum_{n \in \mathcal{N}} \ratpdf{x_n}
	\right)
	]	
\end{align}
Next, we invoke another bound, using the fact that $\log 1 + p \geq \log p$ for any $p > 0$ (log is a monotonic increasing function).
\begin{align}
L(\theta^*) 
	&\geq 
	\mathbb{E}_{\mathcal{X}, S, \mathcal{P}}
	[
	\log \left( \invsratpdf{x_S} \sum_{n \in \mathcal{N}} \sratpdf{x_n}
	\right)
	]
	\\ \notag
	&=
	\mathbb{E}_{\mathcal{X}, S, \mathcal{P}}
	\underbrace{[
	\log 
	\sum_{n \in \mathcal{N}} \sratpdf{x_n}
	]}_{A}
	+
	\mathbb{E}_{\mathcal{X}, S, \mathcal{P}}
	\underbrace{[
	\log \invsratpdf{x_S} 
	]}_{B}
\end{align}
We handle terms A then B separately below.

\paragraph{Term A:}
Term A can be attacked by a useful identity: for any non-empty set of values $a_1, \ldots a_L$, such that all are strictly positive ($a_\ell > 0$), we can bound of log-of-sum as
\begin{align}
\log \left( \sum_{\ell=1}^L a_\ell \right)
	\geq \log L + \frac{1}{L} \sum_{\ell=1}^L \log a_\ell
\end{align}
This identity is easily proven via Jensen's inequality (credit to user Kavi Rama Murthy's post on Mathematics Stack Exchange \citep{4068071}).

Using the above identity, our Term A of interest becomes
\begin{align}
	\text{term A} &= \mathbb{E}_{\mathcal{X}, S, \mathcal{P}}
	\left[
	\log \left(
	\sum_{n \in \mathcal{N}} \sratpdf{x_n}
		\right)
	\right]
	\\ \notag
	&\geq 
	\mathbb{E}_{S, \mathcal{P}}
	\mathbb{E}_{\mathcal{X} \sim p(\mathcal{X} | S, \mathcal{P})}
	\left[ 
		\log |\mathcal{N}| 
		+ \frac{1}{|\mathcal{N}|} \sum_{n \in \mathcal{N}} \log \sratpdf{x_n}
	\right]
\end{align}
Under our model assumptions, the size of $\mathcal{N}$ is fixed to $N-T$ under Assumption~\ref{assump:super} and does not fluctuate with $S$ or $\mathcal{P}$.
Furthermore, recall that given any known value of the target index $S$, all data vectors corresponding to noise indices $\mathcal{N}$ are generated as i.i.d. draws from the noise distribution: $x_n \sim p^-(\cdot)$.
These two facts plus linearity of expectations let us simplify the above as
\begin{align}
	\text{term A} &\geq
	 \log |\mathcal{N}| 
	 + \frac{1}{|\mathcal{N}|} 	
	 \mathbb{E}_{S, \mathcal{P}} \left(
	 \sum_{n \in \mathcal{N}}
	\int_{x_n} p^{-}(x_n) \left[ \log \sratpdf{x_n} \right] dx_n
	\right)
\end{align}
Next, realize that the inner integral is constant with respect to the indices choices defined by random variables $S, \mathcal{P}$. Furthermore, the sum over $n$ simply repeats the same expectation $|\mathcal{N}|$ times (canceling out the $\frac{1}{|\mathcal{N}|}$ term).
This leaves a compact expression for a lower bound on term A:
\begin{align}
	\text{term A} &\geq \log |\mathcal{N}| + 
	\int_{x} p^{-}(x) \left[ \log \sratpdf{x} \right] dx	
	\\ \notag
	&= \log |\mathcal{N}| - \text{KL}( \noipdf{x} || \tarpdf{x} ).
\end{align}
This reveals an interpretation of term A as a negative KL divergence from noise to target, plus the log of the size of negative set (a problem-specific constant).


\paragraph{Term B:}
The right-hand term B only involves the feature vector $x_S$ at the target index $S$ and not any other terms in $\mathcal{X}$. Thus, we can simplify the expectation over $p(\mathcal{X} | S, \mathcal{P})$ as follows
\begin{align}
	\text{term B} = \mathbb{E}_{\mathcal{X}, S, \mathcal{P}}
     \left[
    	\log \invsratpdf{x_S} 
     \right]
    &= 
    \mathbb{E}_{S, \mathcal{P}}
     \left[ 
     \int_{\mathcal{X}}
        \log (\invsratpdf{x_S})
        \tarpdf{x_S}
        \prod_{p \in \mathcal{P}} \tarpdf{x_p}
        \prod_{n \in \mathcal{N}} \noipdf{x_n} 
        d\mathcal{X}
    \right]
    \\ \notag
    &= \mathbb{E}_{S, \mathcal{P}}
     \left[ 
     \int
        \log \left[ \invsratpdf{x_S} \right]
        \tarpdf{x_S}
        d x_S
    \right]
\end{align}
where we got all other $x_p$ and $x_n$ terms to simplify away because integrals over their PDFs evaluate to 1.

Now, we can recognize what remains above as a negative KL divergence.
Let's write this out explicitly, replacing $x_S$ with notation $x$ (no subscript) simply to reinforce that the KL term inside the expectation does not vary with $S$ (regardless of which index is chosen, the target and noise distributions compared by the KL will be the same). This yields
\begin{align}
    \text{term B} &= \mathbb{E}_{S, \mathcal{P}}
     \left[ 
     \int
        \log \left[ \invsratpdf{x} \right]
        \tarpdf{x}
        d x
    \right]
    \\
    &= - \mathbb{E}_{S, \mathcal{P}}
    [ \text{KL}( \tarpdf{x} || \noipdf{x} )]
    \\ 
    &= - \text{KL}( \tarpdf{x} || \noipdf{x} ) 
\end{align}
where in our ultimate expression, we know the KL is constant w.r.t. $\mathcal{S}, \mathcal{P}$. Thus, the expectation simplifies away (writing out the full expectation as a sum then bringing the KL term outside leaves a PMF that sums to one over the sample space).

This reveals an interpretation of term B as a negative KL divergence from target to noise.



\paragraph{Combining terms A and B.}
Putting it all together, we find
\begin{align}
L(\theta) \geq L(\theta^*)  
	&\geq \log |\mathcal{N}| - \underbrace{\left( 
		\text{KL}( \noipdf{x} || \tarpdf{x} ) 
		+ \text{KL}( \tarpdf{x} || \noipdf{x}) \right)}_{\text{symmeterized KL divergence}}
\end{align}
Thus, every evaluation of our proposed SINCERE loss has an information-theoretic interpretation as an upper-bound on the sum of the log of the size of the noise samples and the negative \emph{symmeterized} KL divergence between the target and noise distributions.
For a definition of symmeterized KL divergence see \href{https://en.wikipedia.org/wiki/Kullback-Leibler_divergence#Symmetrised_divergence}{this link to Wikipedia}

\paragraph{Interpretation.}
The bound above helps quantify what loss values are possible, based on two problem-specific elementary facts: the symmeterized divergence between target and noise distributions (where larger values mean the target-noise distinction is easier) and the total number of noise samples.

Naturally, the more trivial lower bound for $L(\theta)$ is zero, as that is the lowest any negative log PMF over any discrete variable (like $S$) can go. We observe that our proposed bound can often provide more information than this trival one, as large $|\mathcal{N}|$ will push the bound well above zero.

We further observe the following:
\begin{itemize}
\item Larger symmeterized KL will lower the RHS of the bound, indicating loss values can go lower. This intuitively makes sense: when target and noise distributions are easier to separate, we can get closer and closer to ``perfect'' predictions of $S$ with our tractable model and thus PMF values $p_{\theta}(S|\mathcal{X}, \mathcal{P})$ approach 1 and $L$ approaches 0.
\item As the total number of noise samples gets higher, the RHS of the bound gets larger. This also makes sense, as the problem becomes harder (more chances to guess wrong when distinguishing between the one target sample and many noise samples), our expected loss should also increase.
\end{itemize}

\paragraph{Relation to previous bounds derived for InfoNCE.}
\citet{oord2019representation} derive a bound relating their InfoNCE loss to a mutual information quantity in the self-supervised case. Indeed, our derivation of our bound was inspired by their work. Here, we highlight three key differences between our bound and theirs.

First, our bound applies to the more general supervised case, not just the self-supervised case.

Second, following their derivation carefully, notice that the claimed bound requires an \emph{approximation} in their Eq. 8 in the appendix ``A.1 Estimating the Mutual Information with InfoNCE'' of \citet{oord2019representation}. While they argue this approximation becomes more accurate as batch size $N$ increases, indeed for any finite $N$ the claim of a strict bound is not guaranteed. In contrast, our entire derivation above requires no approximation. 

Finally, the relation derived in \citet{oord2019representation} is expressed in terms of mutual information, not symmeterized KL divergence.
This is due to their choice to write the noise distribution as $p(x_i)$ and the target distribution as $p(x_i | c)$ where index $c$ denotes the target class of interest. For us, these two choices imply the noise and target are related by the sum rule 
\begin{align}
	p(x_i) = \sum_{c'} p( c') p( x_i | c')
\end{align}
over an extra random variable $c$ whose sample space and PMF are not extremely clear, at least in our reading of \citet{oord2019representation}.
In contrast, our formulation throughout Sec. 3.1 of the main paper fixes one target and one noise distribution throughout. We think this is a conceptually cleaner approach.

%


\section{Further Analysis of Gradients}
\label{supp:gradient}

The gradient of the SINCERE loss with respect to $z_t$ is $\frac{\delta}{\delta z_t} L_\text{SINCERE}(z_t)$
\begin{align}
    &= \frac{\delta}{\delta z_t}
    \frac{-1}{|\mathcal{P}_t|} \sum\limits_{p \in \mathcal{P}_t} \log
    \frac{e^{z_t \cdot z_{p} / \tau}}
    {e^{z_t \cdot z_{p} / \tau} + \sum_{i \in \mathcal{N}_t} e^{z_t \cdot z_i / \tau}}\\
    &=\frac{\delta}{\delta z_t} \frac{-1}{|\mathcal{P}_t|} \sum\limits_{p \in \mathcal{P}_t} (
    \frac{z_t \cdot z_{p}}{\tau} - \log \sum_{i \in \mathcal{N}_t \cup \{p\}} e^{z_t \cdot z_i / \tau} )\\
    &=\frac{-1}{\tau |\mathcal{P}_t|} \sum\limits_{p \in \mathcal{P}_t} (z_{p} - 
    \frac{\sum_{i \in \mathcal{N}_t \cup \{p\}} z_i e^{z_t \cdot z_i / \tau}}
    {\sum_{i \in \mathcal{N}_t \cup \{p\}} e^{z_t \cdot z_i / \tau}} )\\
    &=\frac{-1}{\tau |\mathcal{P}_t|} \sum\limits_{p \in \mathcal{P}_t} (z_{p} - 
    \frac{z_{p} e^{z_t \cdot z_{p} / \tau} + \sum_{i \in \mathcal{N}_t} z_i e^{z_t \cdot z_i / \tau}}
    {\sum_{i \in \mathcal{N}_t \cup \{p\}} e^{z_t \cdot z_i / \tau}} )\\
    &=\frac{1}{\tau |\mathcal{P}_t|} \sum\limits_{p \in \mathcal{P}_t} (z_{p}(
    \frac{e^{z_t \cdot z_{p} / \tau}}
    {\sum_{i \in \mathcal{N}_t \cup \{p\}} e^{z_t \cdot z_i / \tau}} - 1)
    \label{suppeq:SINCERE_grad}
    + \frac{\sum_{i \in \mathcal{N}_t} z_i e^{z_t \cdot z_i / \tau}}
    {\sum_{i \in \mathcal{N}_t \cup \{p\}} e^{z_t \cdot z_i / \tau}} ).
\end{align}

\section{Runtime and Memory Complexity}
\label{sec:complexity}

Given a batch of $N$ data points, each with a $D$-dimensional embedding, SINCERE or SupCon loss can be computed in $O(N^2 D)$ time, with quadratic complexity arising due to need for computation of dot products between many pairs of embeddings.
An implementation that was memory sensitive could be done with $O(ND)$ memory, which is the cost of storing all embedding vectors.
Our implementation has memory cost of $O(N^2 + ND)$, as we find computing all $N^2$ pairwise similarities at once has speed advantages due to vectorization.
In our experiments with a batch size of 512, we find the runtime of computing embeddings with the forward pass of a neural network far exceeds the runtime of computing losses given embeddings.

\section{Probabilistic View of SupCon}
\label{sec:supcon_compare}

Attempting to translate SupCon loss into the noise-contrastive paradigm suggests that it assigns probability to the data point at index $S$ out of all possible data points via
\begin{equation}
\frac{ \ratpdf{x_S} }
     { \ratpdf{x_S} 
     + \sum_{j \in \mathcal{P} \setminus \{p\}} \ratpdf{x_j}
     + \sum_{n \in \mathcal{N}} \ratpdf{x_n}
     }.
\end{equation}
We emphasize that this does \emph{not} correspond to a principled derivation from a coherent probabilistic model.
In fact, it is not a softmax over values of $S$ because the sum of these values over all valid $S$ is less than 1.
In contrast, our derivation of SINCERE follows directly from the model in \eqref{eq:SINCERE_model}.
Furthermore, this framing of SupCon makes clear that the additional denominator terms penalize similarity between embeddings from the target distribution, which results in the problematic intra-class repulsion behavior described in Fig.~\ref{fig:loss_vis}.

\section{SupCon Gradient Analysis}
\label{supp:supcon_gradient}

SupCon's possible repulsion between members of the same class increases in severity as $|\mathcal{P}|$ increases, resulting in a scalar in $[0, 1]$ as $|\mathcal{P}|$ approaches positive infinity.
\citet{Khosla_SupContrast} previously hypothesized that the $\frac{-1}{|\mathcal{P}|}$ term came from taking the mean of the embeddings $z_p \in \mathcal{P}$ .
Our analysis suggests it is actually due to improperly including target class examples other than $S$ and $p$ in the loss' denominator.

A similar issue arises from the summation over the noise distribution in \eqref{eq:supcon_grad}.
Each softmax includes the noise distribution and the entire target distribution in the denominator instead of only the noise distribution and $z_{p}$ as in \eqref{eq:SINCERE_grad}.
This reduces the SupCon loss' penalty on poor separation between the noise and target distributions.

\section{SupCon Bound Looser than SINCERE Bound}
\label{supp:supcon_bound}

Applying the strategy from Sec. \ref{supp:bound} to SupCon loss results in the following bound:
\begin{equation}
L^{\text{SupCon}} \geq \log (|\mathcal{N}| + |\mathcal{P}| - 1) - \frac{|\mathcal{N}|}{|\mathcal{N}| + |\mathcal{P}| - 1} \left(KL( p^- || p^+) + KL( p^+ || p^- ) \right).
\end{equation}
This bound is greater than or equal to the SINCERE bound with equality only in the case of $\mathcal{P} = 1$, where the losses are equivalent.

\section{SupCon Equation Notation}
\label{supp:supcon_eq_notation}

\citet{Khosla_SupContrast} write the SupCon loss in their notation as:
\begin{equation}
\sum_{i \in I} \frac{-1}{|P(i)|} \sum_{p \in P(i)} \log \frac{e^{z_i \cdot z_p / \tau}}{\sum_{a \in A(i)} e^{z_i \cdot z_a / \tau}}
\end{equation}
whereas the SupCon loss in our notation is:
\begin{equation}
\sum_{S=1}^N \frac{-1}{|\mathcal{P}|} \sum\limits_{p \in \mathcal{P}} \log
    \frac{e^{z_S \cdot z_p / \tau}}
    {(\sum_{j \in \mathcal{T} \setminus \{p\}} e^{z_j \cdot z_p / \tau}) + \sum_{n \in \mathcal{N}} e^{z_n \cdot z_p / \tau}}.
\end{equation}
The sets in each definition are equivalent: $I = \llbracket 1, N \rrbracket$, $P(i) = \mathcal{P}$, and $A(i) = (\mathcal{T} \setminus \{p\}) \cup \mathcal{N}$.
The only remaining difference is the former chooses to use the outer sum variable $i$ in each term of the softmax while we choose the inner sum variable $p$.
Their softmax given $i=\alpha,\ p=\beta$ is equivalent to our softmax given $S=\beta,\ p=\alpha$ for all indices $\alpha,\ \beta$ sharing a class.
Since each pair $\alpha,\ \beta$ will appear in exactly one term of each summation, the definitions are equivalent.

\section{Training and Hyperparameter Selection}
\label{sec:training}

Models were trained on a Red Hat Enterprise Linux 7.5 server with a A100 GPU with 40 GiB of memory and 16 Intel Xeon Gold 6226R CPUs.
Many of the CPUs were primarily used for parallelization of data loading, so fewer or smaller CPUs could be used easily.
PyTorch 2.0.1 and Torchvision 0.15.2 for CUDA 12.1 were used for model and loss implementations.

A hyperparameter search was done for each loss with $10\%$ of the training set used as validation.
Training was done with 800 epochs of stochastic gradient descent with 0.9 momentum, 0.0001 weight decay, 512 batch size, and a cosine annealed learning rate schedule with warm-up, which spends 10 epochs warming up from $0.1\%$ to $100\%$ then cosine anneals back to $0.1\%$ at the last epoch.
Various settings of temperature ($\tau$) and learning rate were evaluated, with the highest 1NN accuracy determining the final model parameters.
The additional hyperparameter $\epsilon$ was searched over the values $[0.1, 0.25, 0.5]$ as in \citet{barbano2023unbiased}.
The final models were trained on the entire training set, with evaluations on the test set reported in the main paper.

Transfer learning results used the SINCERE and SupCon ImageNet-100 models as frozen feature extractors for linear classifiers.
This differs from the full model finetuning method used by \citet{Khosla_SupContrast} in order to more clearly determine the effects of the frozen embedding features.
Our reported transfer learning accuracy results are more comparable to the transfer learning results by \citet{SimCLRv1}, although they opt for L-BFGS optimization without data augmentation instead of our choice of SGD optimization with random crops and horizontal flips.
Training was done with 100 epochs of stochastic gradient descent with 0.9 momentum, 0.0001 weight decay, and 128 batch size.
Various learning rates were evaluated, with the highest classification accuracy on the $10\%$ of the training set used as validation determining the final model parameters.
The final models were trained on the entire training set, with evaluations on the test set reported in the main paper.

\section{Training Loss Comparison}
\label{sec:train_loss}

\begin{table*}[!t]
\centering
\begin{tabular}{c|cc|cc|cc|ll}
\toprule
                 & \multicolumn{2}{c|}{CIFAR-2} & \multicolumn{2}{c|}{CIFAR-10} & \multicolumn{2}{c|}{CIFAR-100} & \multicolumn{2}{c}{ImageNet-100} \\
Training Loss & Initial    & Final           & Initial    & Final            & Initial     & Final            & Initial      & Final             \\ \midrule
SupCon           & 6.94       & 6.28            & 6.94       & 4.69             & 6.91        & 2.37             & 6.92         & 2.35              \\
SINCERE          & 6.25       & \textbf{0.99}   & 6.80       & \textbf{0.29}    & 6.91        & \textbf{0.10}    & 6.91         & \textbf{0.11}     \\
\bottomrule
\end{tabular}
\caption{
Average training loss values for initial and final training epochs.
The final SINCERE loss value consistently approaches 0 regardless of the number of classes in the data set.
Intra-class repulsion causes SupCon loss' minimum to increase with fewer classes, which is seen in practice in the large variation in final loss value.
}
\label{table:loss}
\end{table*}

The analysis of SupCon loss in Sec.~\ref{sec:loss_grad} suggests that intra-class repulsion will increase the minimum loss value relative to SINCERE.
Assuming uniform class frequencies and a fixed batch size, there are more loss terms responsible for intra-class repulsion as the number of classes decreases.
Therefore there should be a larger difference between the minimums of SupCon and SINCERE loss as the number of classes decreases.

Table \ref{table:loss} clearly shows this in practice.
SupCon and SINCERE losses have very similar values during the first training epoch, but the final training value for SINCERE loss is significantly lower than SupCon loss for all data sets due to the elimination of intra-class repulsion.
Intra-class repulsion is most severe for the two class dataset CIFAR-2, with the final training loss value near the initial training loss value for SupCon.
Moving to 10 classes in CIFAR-10 and then to 100 classes in the largest datasets, SupCon loss' final training loss value does decrease, but remains higher than SINCERE loss in each case.

\section{Linear Probing Results}
\label{supp:linear_results}

\begin{table}[!h]
\centering
\begin{tabular}{c|c|c}
\toprule
Pretraining Loss & CIFAR-10 & CIFAR-100 \\
\midrule
SupCon           & 95.78    & 75.96     \\
SINCERE          & 95.93    & 75.86     \\
\bottomrule
\end{tabular}
\caption{Accuracy of linear probing on test set.
Differences are not statistically significant according to bootstrap interval analysis.
\textbf{Takeaway:} It is unsurprising that SINCERE and SupCon perform similarly, given the fact that both methods induce clear separation between target and noise in Fig.~\ref{fig:cos_hist}, even though the margin of separation varies. This paper's core claim is that SINCERE leads to notably wider target-noise separation in Fig.~\ref{fig:cos_hist} due to eliminating intra-class repulsion, not that this necessarily improves accuracy.
}
\label{table:linear_acc}
\end{table}

Previously, SupCon \citep{Khosla_SupContrast} fit a linear classifier on frozen embeddings instead of using k-nearest neighbors.
However, those results and our reproduction in Table \ref{table:linear_acc} show both methods have similar accuracy on CIFAR-10 and CIFAR-100, with less than a $0.6$ percentage point change in accuracy.
Therefore k-nearest neighbor was chosen to evaluate directly and simply on the learned embeddings.
This avoids the need for additional training, as highlighted in the appendix of \citet{Khosla_SupContrast}:
``We also note that it is not necessary to train a linear classifier in the second stage, and previous works have used k-Nearest Neighbor classification or prototype classification to evaluate representations on classification tasks."

\section{Supervised Classification Accuracy}
\label{supp:sup_class_acc}

\begin{table*}[!t]
\centering
\begin{tabular}{l|ll|ll|ll|ll}
\toprule
              & \multicolumn{2}{c|}{CIFAR-2} & \multicolumn{2}{c|}{CIFAR-10} & \multicolumn{2}{c|}{CIFAR-100} & \multicolumn{2}{c}{ImageNet-100} \\
Training Loss & 1NN           & 5NN          & 1NN           & 5NN           & 1NN            & 5NN           & 1NN             & 5NN            \\ \midrule
SupCon        & 92.15         & 92.15        & 95.53         & 95.57         & \textbf{76.54}          & \textbf{76.31}         & 71.32           & 72.16          \\
$\epsilon$-SupInfoNCE & 92.08         & 92.03        & 95.97         & 96.01         & 75.52          & 75.44         & 70.52           & 70.95          \\
SINCERE       & 92.75         & 92.55        & 95.88         & 95.91         & \textbf{76.23}          & \textbf{76.13}         & 71.18           & 71.36          \\
\bottomrule
\end{tabular}
\caption{Accuracy of k-nearest neighbor classifiers on test set, using 32$\times$32 resolution images.
SINCERE's performance is essentially indistinguishable from SupCon.
Results are boldfaced only when differences are statistically significant according to bootstrap interval analysis, showing $\epsilon$-SupInfoNCE performs worse than SINCERE and SupCon on CIFAR-100 but is otherwise indistinguishable.
\textbf{Takeaway:} It is unsurprising that SINCERE and SupCon perform similarly, given the fact that both methods induce clear separation between target and noise in Fig.~\ref{fig:cos_hist}, even though the margin of separation varies. This paper's core claim is that SINCERE leads to notably wider target-noise separation in Fig.~\ref{fig:cos_hist} due to eliminating intra-class repulsion, not that this necessarily improves accuracy.
}
\label{table:acc}
\end{table*}

We measure classification accuracy for each trained embedding model via a weighted k-nearest neighbor evaluation on the test set.
As in \citet{Wu_2018_CVPR}, cosine similarity is used to chose the nearest neighbors and to weight votes.
See App. \ref{supp:linear_results} for comparison to linear probing.

Table \ref{table:acc} reports accuracy using 1 and 5-nearest neighbors.
The difference between accuracies for SINCERE and SupCon is not statistically significant in all cases, based on the 95\% confidence interval of the accuracy difference \citep{acc_comparison} from 1,000 iterations of test set bootstrapping.
The one statistically significant result shows that $\epsilon$-SupInfoNCE performs worse than SINCERE and SupCon on CIFAR-100.

These results are surprising given how different the learned embedding spaces of the methods are.
We hypothesize that this occurs because SupCon loss is still a valid nonparametric classifier.
The model learns an effective classification function, but does not optimize for further separation of target and noise distributions like SINCERE does.

\section{Learned Embedding Space by Class}
\label{supp:cos_hist_class}

Figure \ref{fig:cos_hist_class} shows the pairs from Figure \ref{fig:cos_hist} broken down by target and noise distributions for individual classes.

\begin{figure}
    \centering
    \begin{tabular}{cc}
    \includegraphics[width=0.47\textwidth]{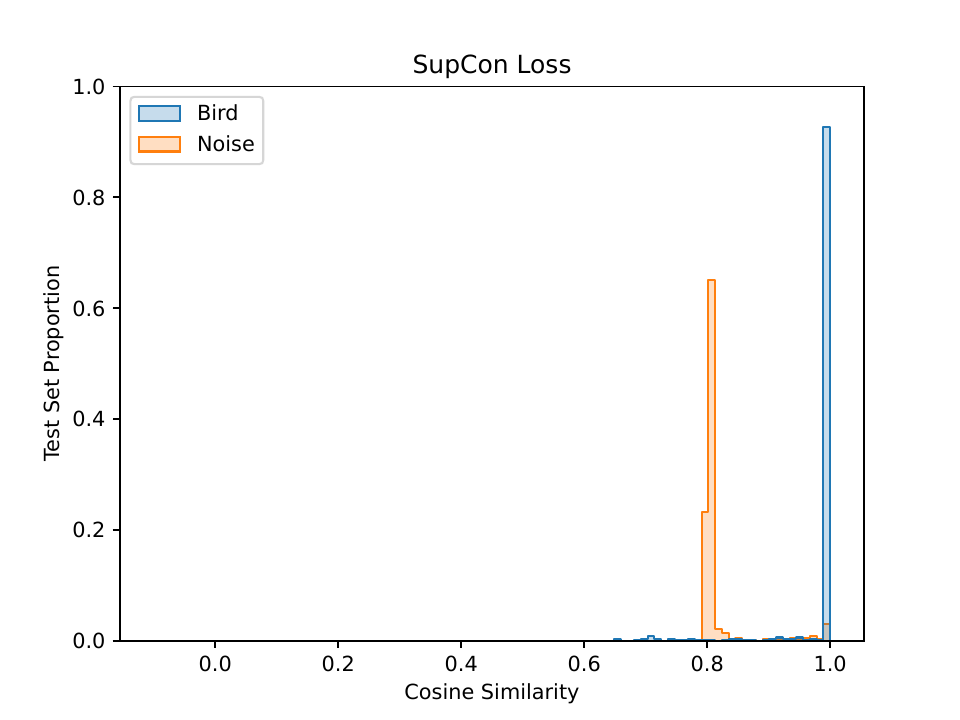} &
    \includegraphics[width=0.47\textwidth]{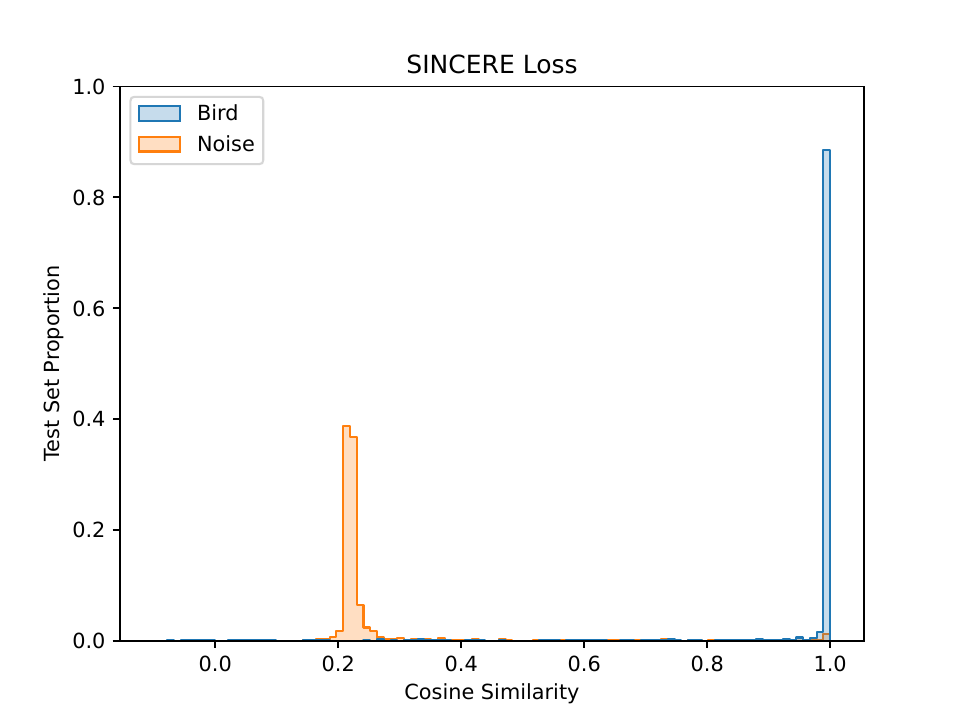} \\ 
    \includegraphics[width=0.47\textwidth]{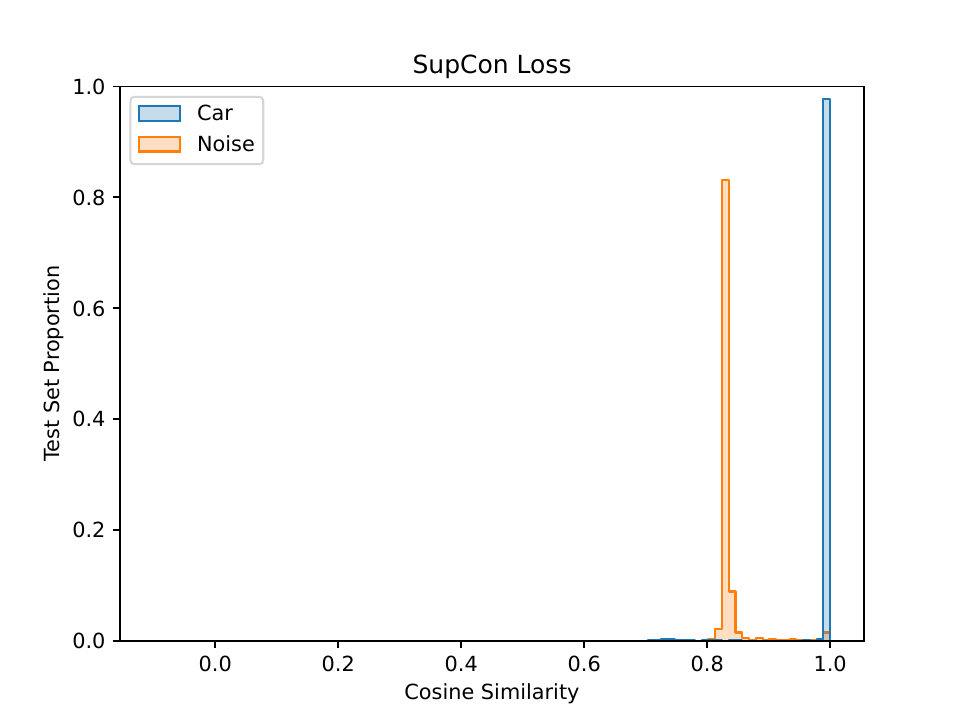} &
    \includegraphics[width=0.47\textwidth]{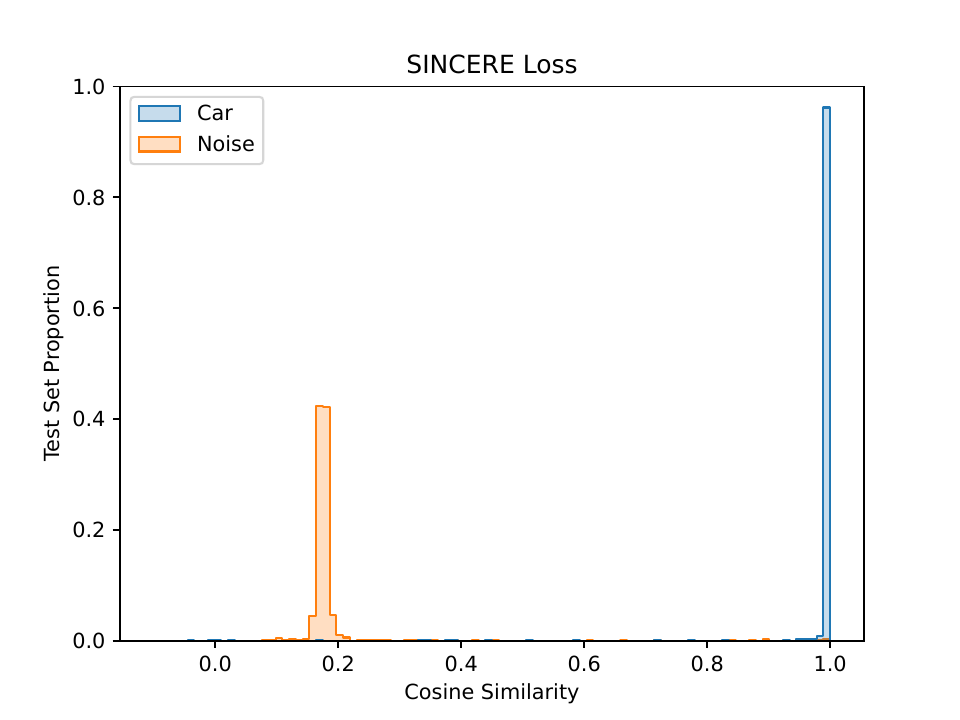} \\
    \includegraphics[width=0.47\textwidth]{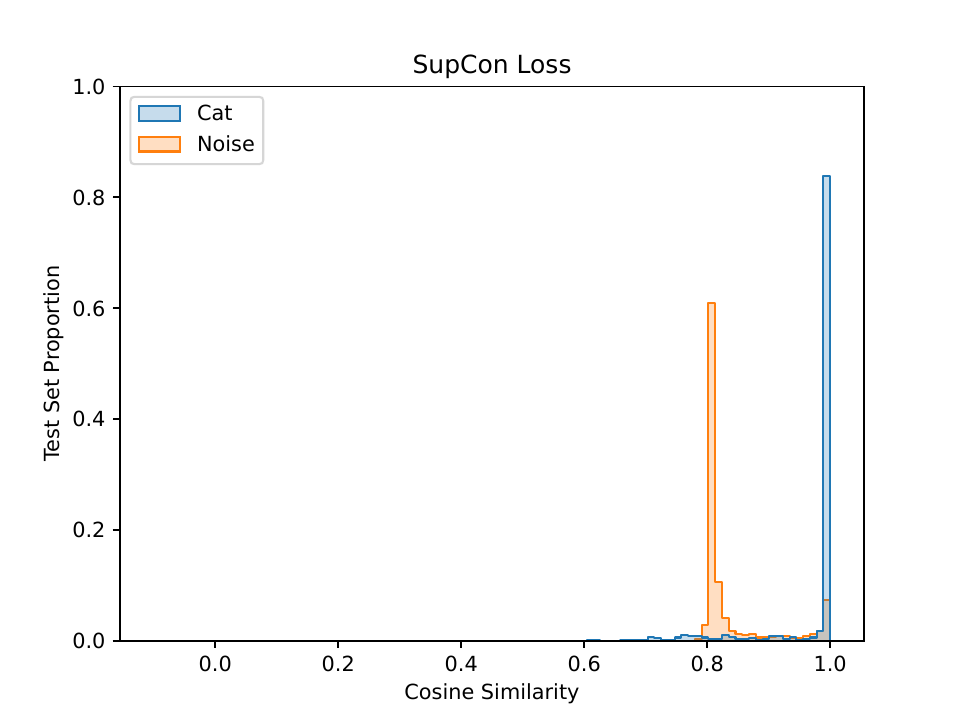} &
    \includegraphics[width=0.47\textwidth]{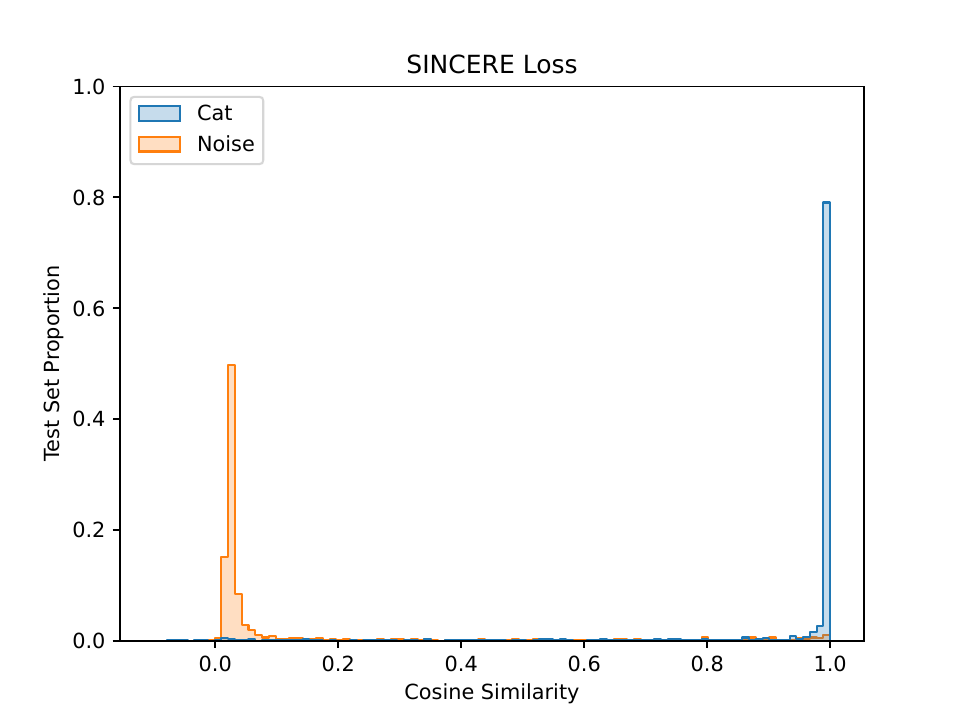}
    \end{tabular}
    \caption{
    Histograms of cosine similarity values for CIFAR-10 test set nearest neighbors, comparing SupCon (left) and SINCERE (right).
    For each class, we plot the similarity of each test image with that class to the nearest target image in the training set as well as the nearest noise image in the training set.
    SINCERE loss maintains high similarity for the target distribution while lowering the cosine similarity of the noise distribution more than SupCon loss.
    }
    \label{fig:cos_hist_class}
\end{figure}
\begin{figure} \ContinuedFloat
    \begin{tabular}{cc}
    \includegraphics[width=0.47\textwidth]{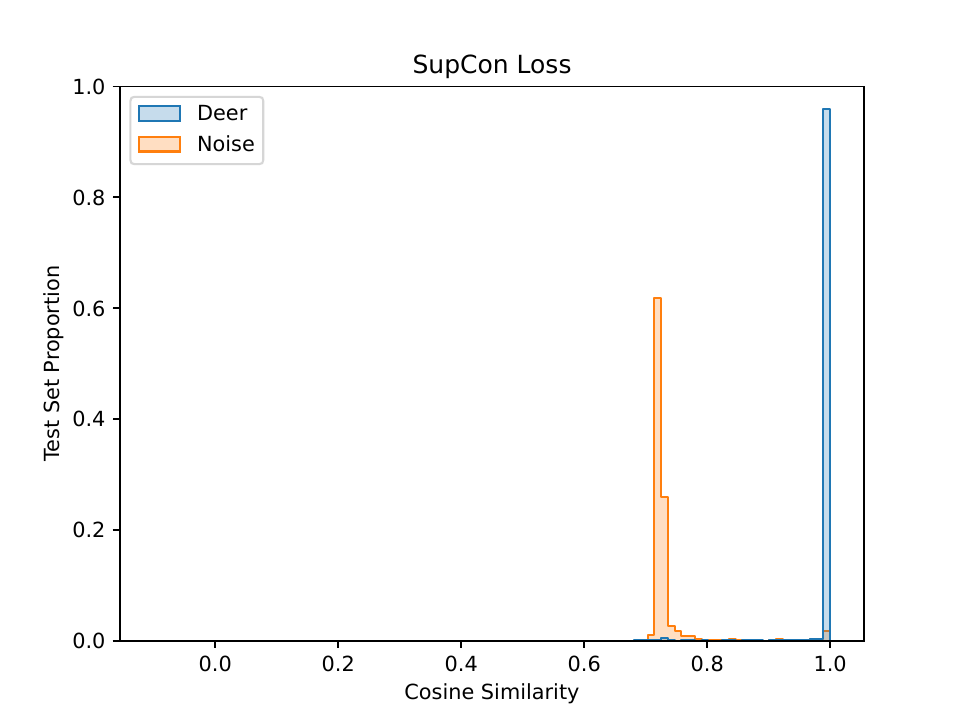} &
    \includegraphics[width=0.47\textwidth]{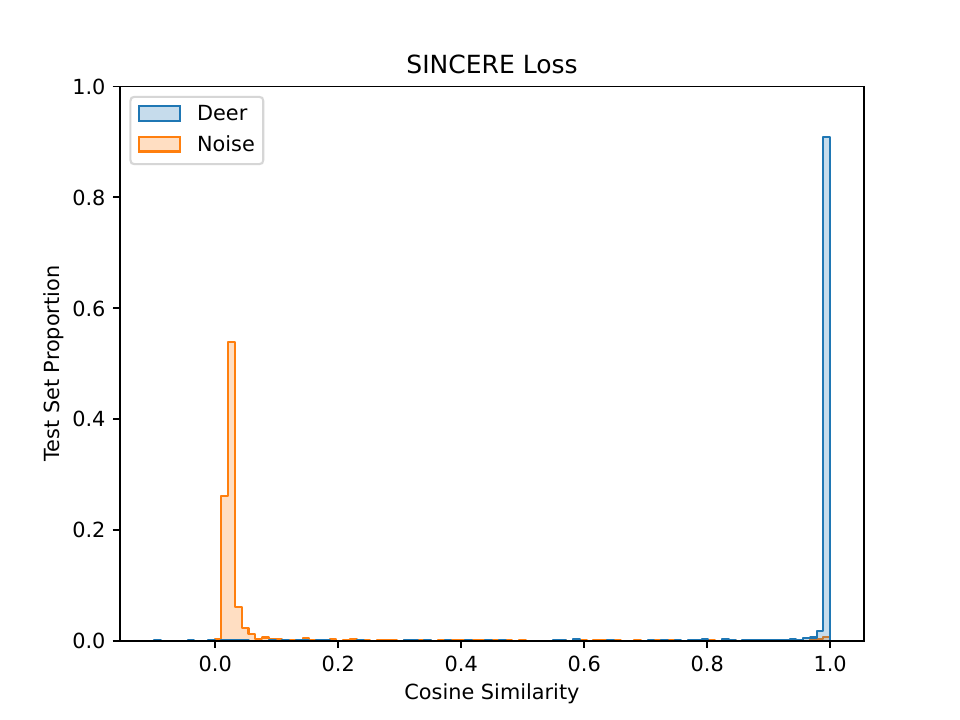} \\
    \includegraphics[width=0.47\textwidth]{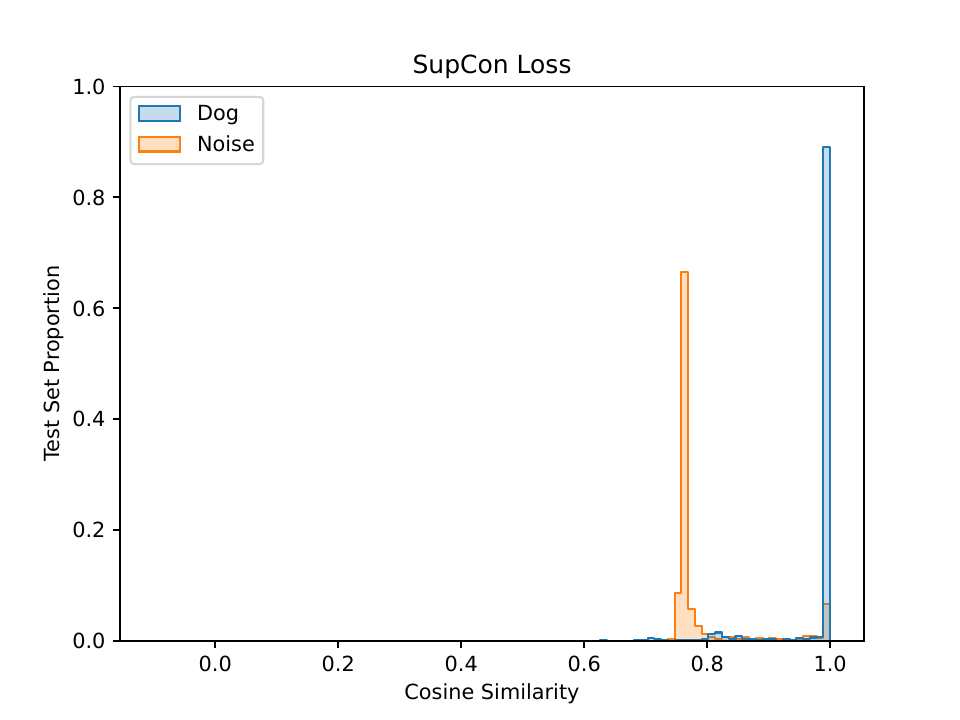} &
    \includegraphics[width=0.47\textwidth]{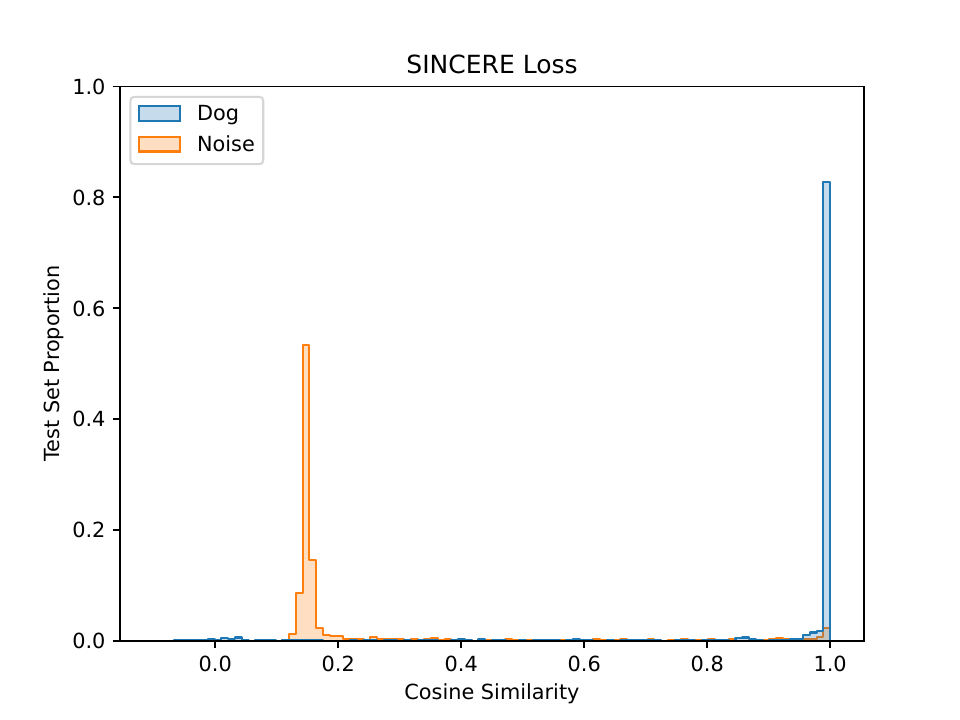} \\
    \includegraphics[width=0.47\textwidth]{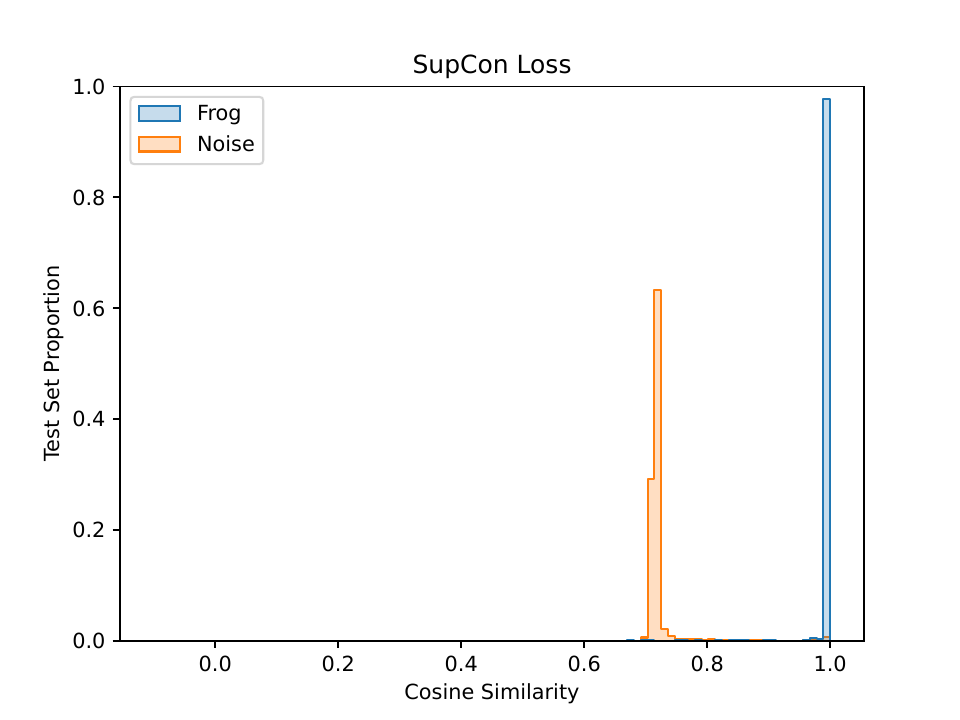} &
    \includegraphics[width=0.47\textwidth]{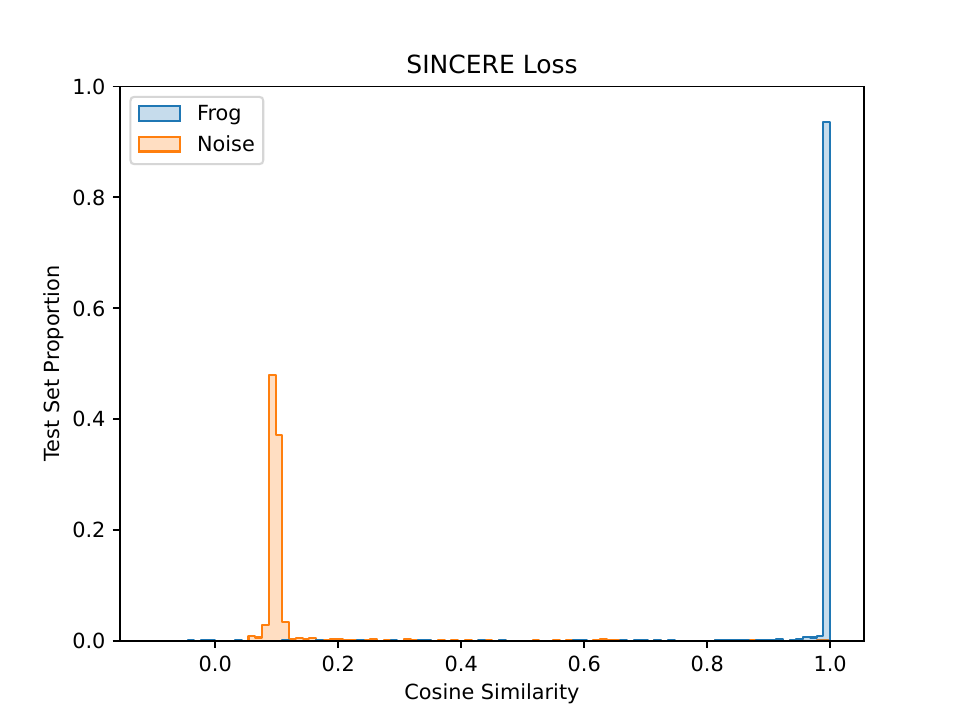}
    \end{tabular}
\end{figure}
\begin{figure} \ContinuedFloat
    \begin{tabular}{cc}
    \includegraphics[width=0.47\textwidth]{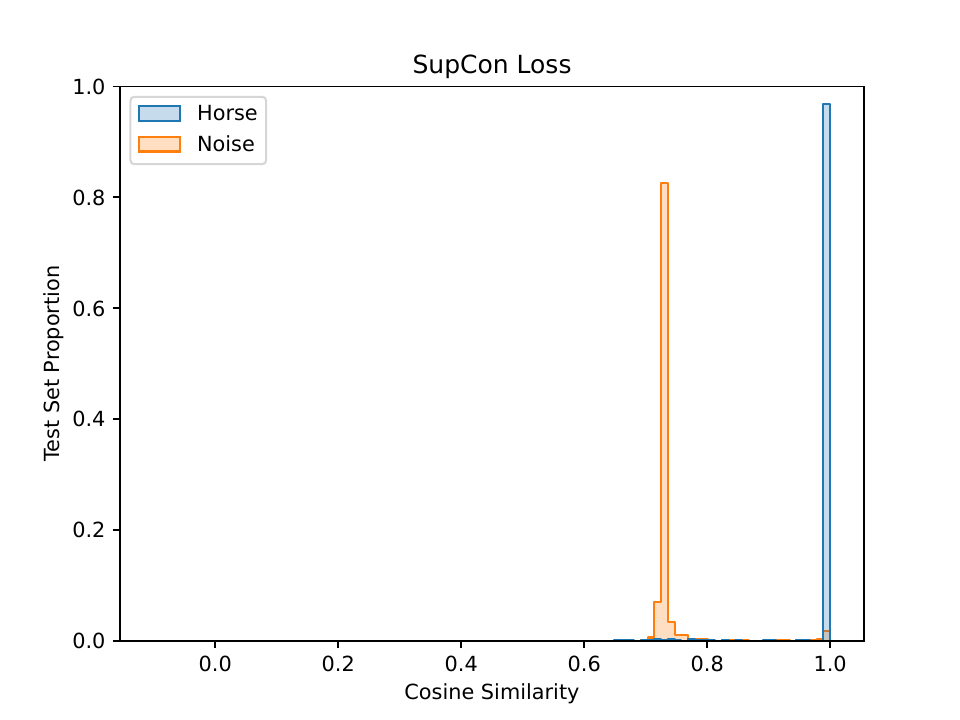} &
    \includegraphics[width=0.47\textwidth]{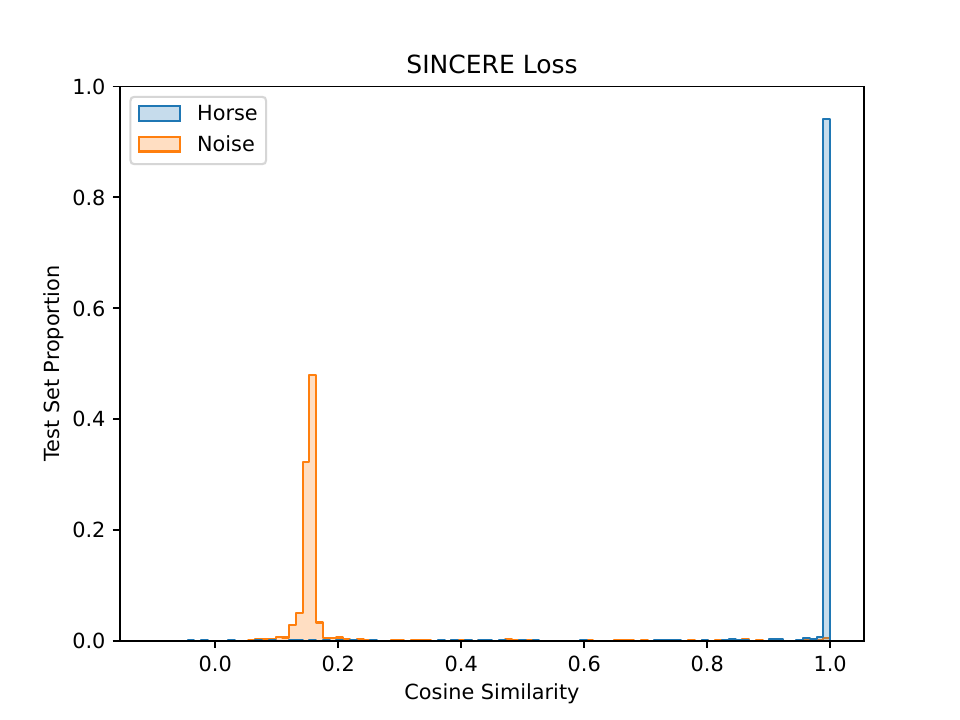} \\
    \includegraphics[width=0.47\textwidth]{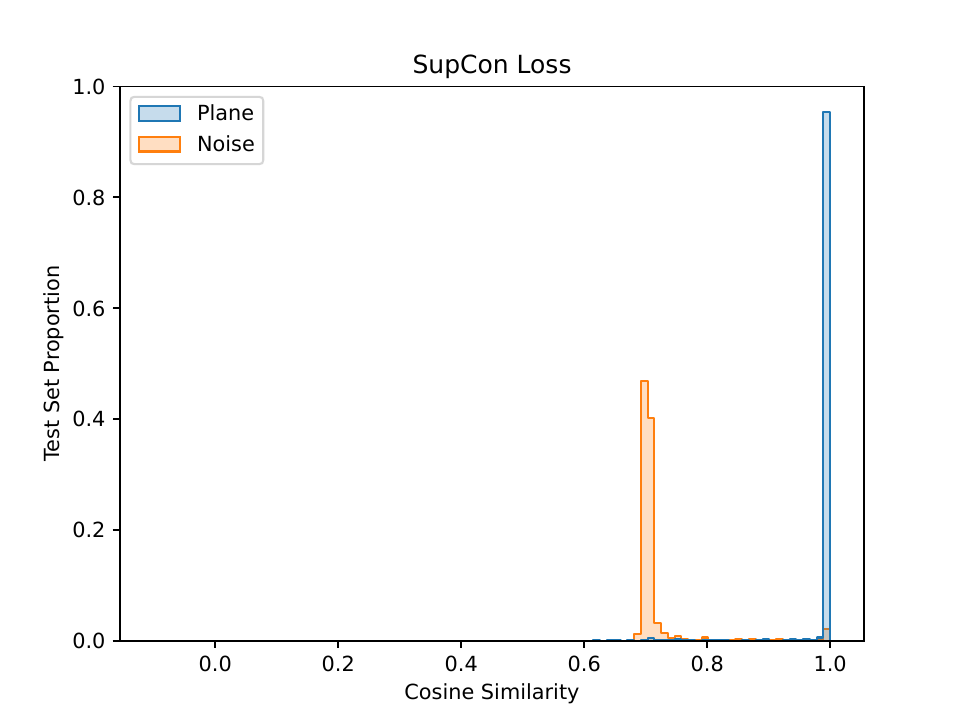} &
    \includegraphics[width=0.47\textwidth]{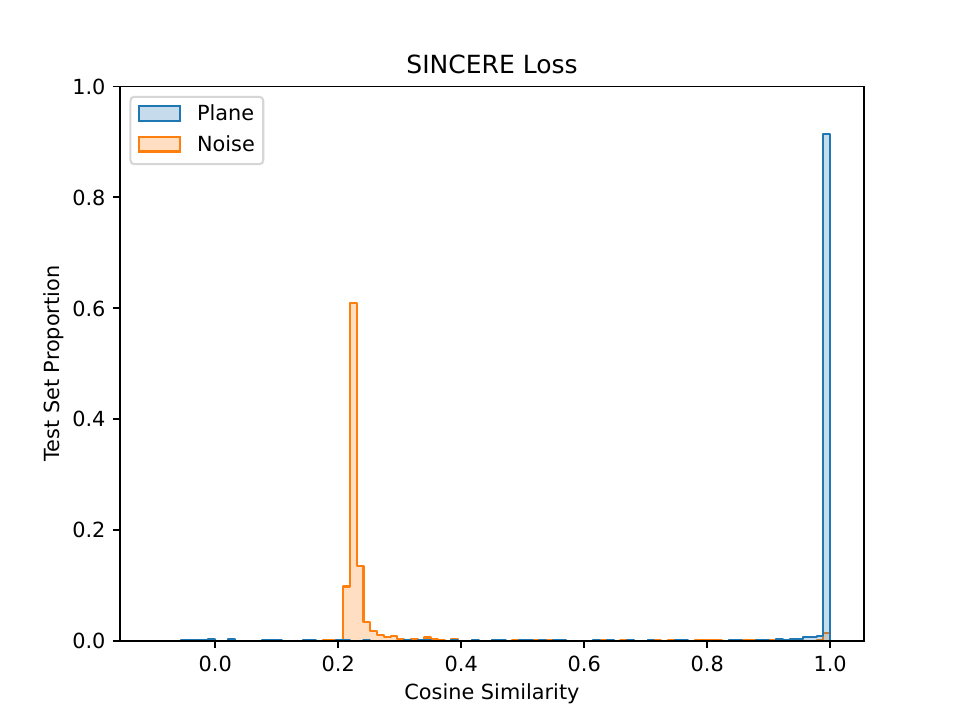} \\
    \includegraphics[width=0.47\textwidth]{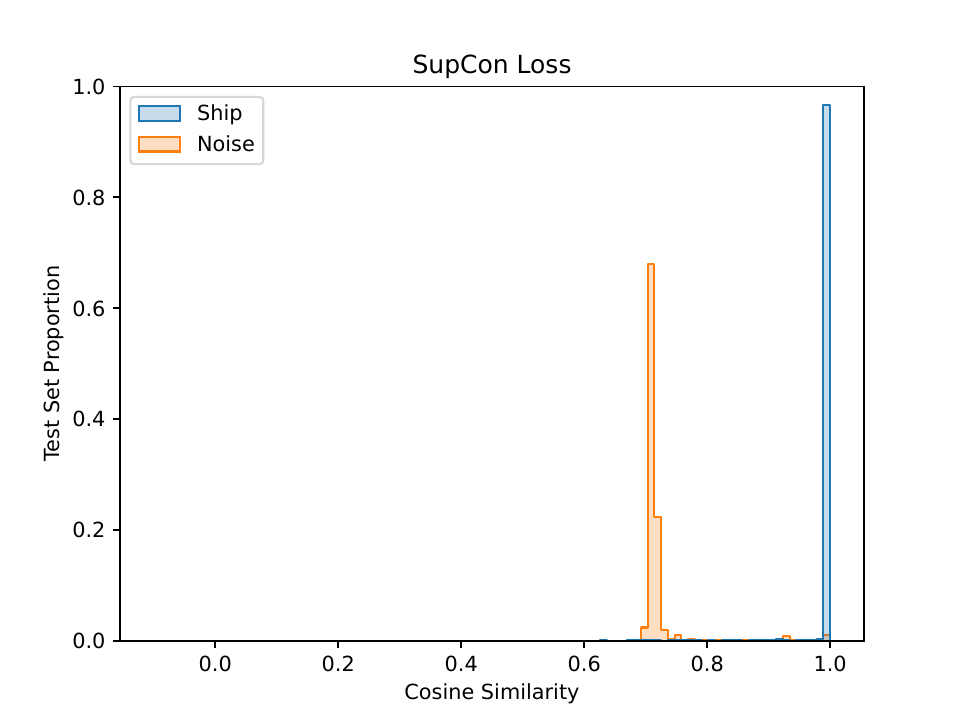} &
    \includegraphics[width=0.47\textwidth]{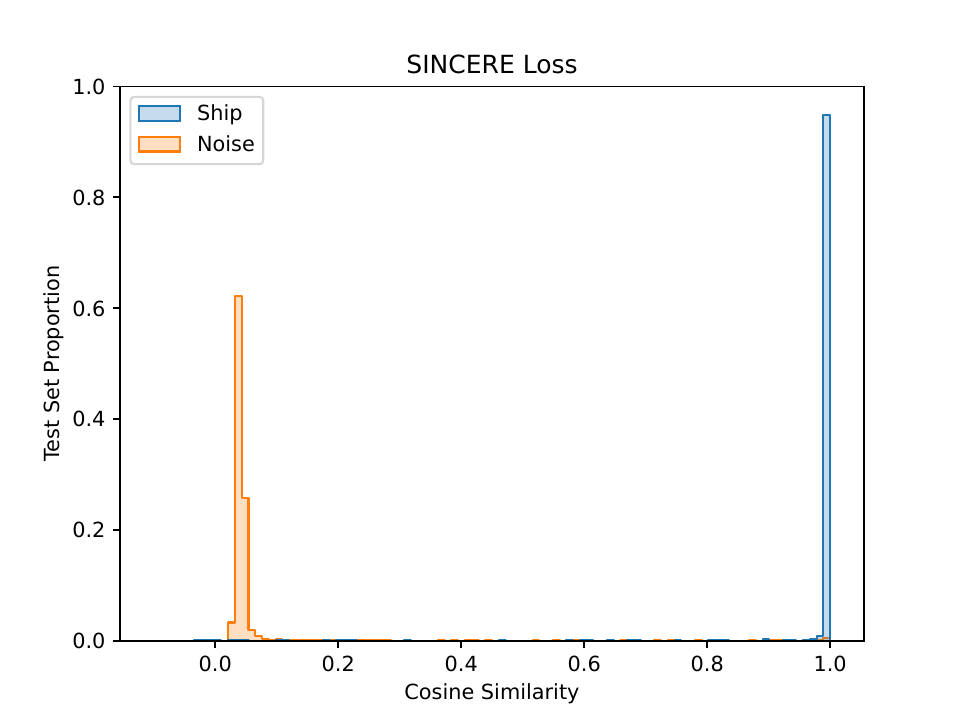} \\
    \end{tabular}
\end{figure}
\begin{figure} \ContinuedFloat
    \begin{tabular}{cc}
    \includegraphics[width=0.47\textwidth]{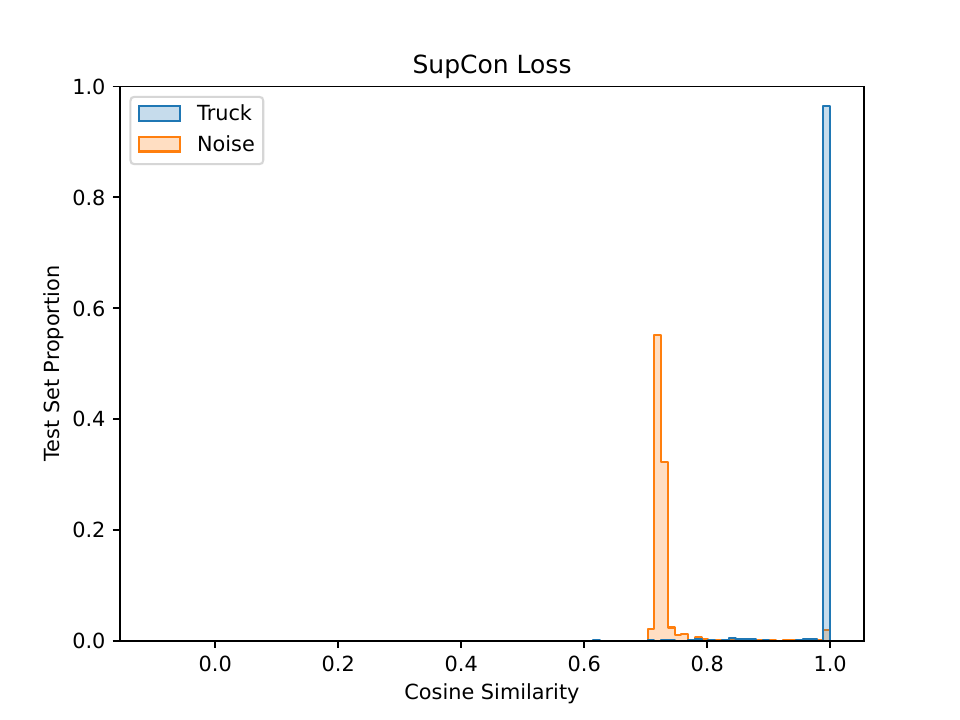} &
    \includegraphics[width=0.47\textwidth]{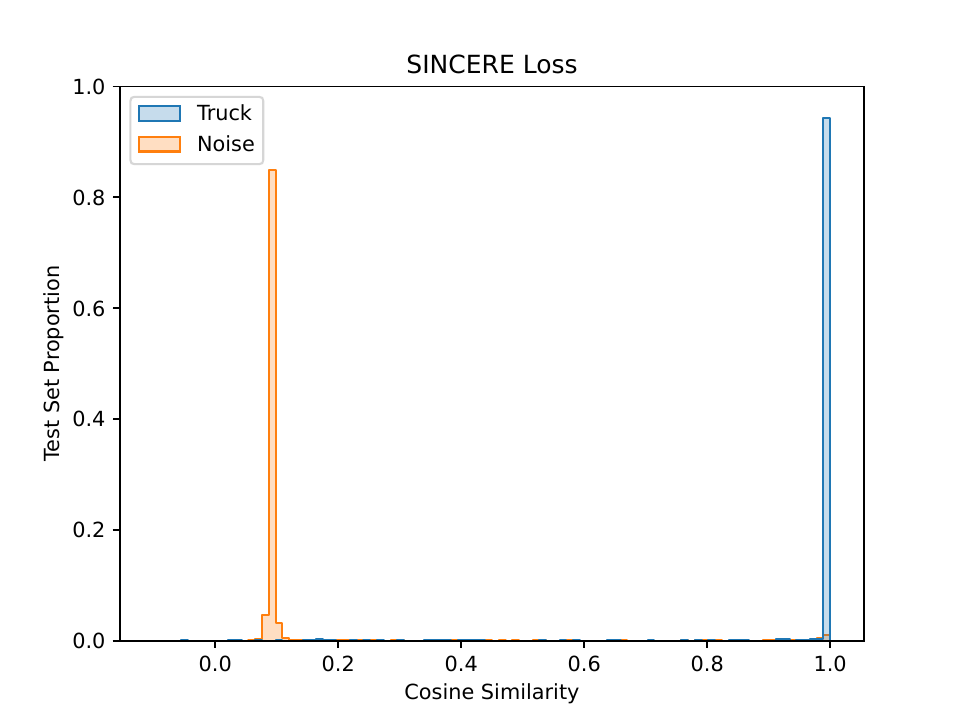}
    \end{tabular}
\end{figure}

\end{document}